\title{Optimally Teaching a Linear Behavior Cloning Agent}
\author{%
  Shubham Kumar Bharti\\
  UW-Madison\\
  \texttt{skbharti@cs.wisc.edu} \\
  \And
  Stephen Wright \\
  UW-Madison\\
  \texttt{swright@cs.wisc.edu} \\
  \And
  Adish Singla \\
  MPI-SWS\\
  \texttt{adishs@mpi-sws.org} \\
 \And
 Xiaojin Zhu \\
 UW-Madison\\
 \texttt{jerryzhu@cs.wisc.edu} \\
}
\begin{document}

\maketitle

\begin{abstract}
We study optimal teaching of Linear Behavior Cloning (LBC) learners. In this setup, the teacher can select which states to demonstrate to an LBC learner. The learner maintains a version space of infinite linear hypotheses consistent with the demonstration.
The goal of the teacher is to teach a realizable target policy to the learner using minimum number of state demonstrations. This number is known as the Teaching Dimension(TD). We present a teaching algorithm called ``Teach using Iterative Elimination(TIE)'' that achieves  instance optimal TD. However, we also show that finding optimal teaching set computationally is NP-hard.
We further provide an approximation algorithm that guarantees an approximation ratio of $\log(|\cA|-1)$ on the teaching dimension.
Finally, we provide experimental results to validate the efficiency and effectiveness of our algorithm.
\end{abstract}

\section{Motivation}
Behavior cloning is a form of imitation learning  \cite{Codevilla_2019_ICCV,DBLP:journals/corr/abs-1905-13566} where a teacher demonstrates selected (state, target action) pairs to a learner.
Naive Behavior Cloning by demonstrating on all states can be very inefficient in environments with a large state space.
However, the learner, utilizing the inductive bias of its hypothesis (policy) family, has the ability to generalize the demonstration to states not demonstrated on, as long as the teacher provides a consistent and sufficient demonstration.
This promotes the question: \textbf{What is the minimum set of demonstration states that can teach a full target policy to the learner?}
This is known as the optimal teaching problem in the context of behavior cloning, and the size of the minimum set is known as the teaching dimension.
Prior works have studied optimal teaching on behavior cloning with finite hypothesis class~\cite{goldman1992complexity}.
This paper takes a significant step forward to allow teaching with an uncountable hypothesis family, specifically the family of linear policies. 

\begin{figure}[!htb]
	\centering
	\includegraphics[width=0.4\textwidth]{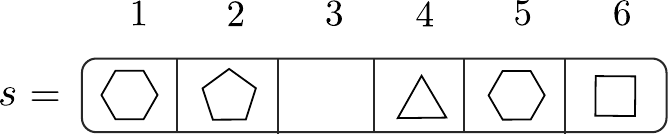}
	\caption{A board in a ``pick the right diamond'' game.  For this board, the target policy defined in Example~\ref{exp:pick_the_diamond_in_intro} says to pick the diamond in slot 5.}
	\label{fig:pick_the_diamond_in_intro}
\end{figure}

\begin{example}[Pick the right diamond]
\label{exp:pick_the_diamond_in_intro} 
The game is shown in Figure~\ref{fig:pick_the_diamond_in_intro}.
There is a board with $n=6$ slots.
Each slot can have one of 4 different diamonds (differ by the number of edges) or be empty, but the whole board is not empty. 
The rule says that one must pick the most expensive diamond on the board, i.e. one with the most edges;
and if there are multiple slots with the most expensive diamond, pick the right-most one.
The teacher wants to demonstrate this rule to the learner.

The state space contains $5^n-1$ states.
The action space is $\{1, \ldots, n\}$, namely, from which slot should the learner choose.
The rule is a target policy $\pi^\dagger$: for any given state (board) $s$, $a=\pi^\dagger(s) \in [n]$ is the correct action at $s$.
The naive behavior cloning demonstration will show the target action on all $5^n-1$ states.
Now suppose that the leaner uses a simple and intuitive linear hypothesis family, where the feature vector consists of the slot index and the number of edges in that slot.
It turns out that a clever teacher can teach the full rule by demonstrating the target action in only two states!
Details are provided in Section~\ref{exp:teach_to_pick_the_diamond}.
\end{example}

\textbf{Our Contributions}: To our knowledge, we are the first to formalize the optimal teaching problem for Linear Behavior Cloning (LBC) learners.
We give a novel algorithm that computes a teaching set that achieves the teaching dimension.
We also give a
computationally efficient approximation with a $\log(|\cA|-1)$ approximation ratio on teaching dimension.

\section{Problem Formulation}
An LBC teaching problem has two entities: 
a Linear Behavior Cloning leaner and a teacher. Their interaction is defined in Procedure~\ref{alg:interaction}.
\floatname{algorithm}{Procedure} 
\begin{algorithm}[H]
	\caption{LBC Interaction Protocol}\label{alg:interaction}
	\begin{flushleft}
		\textbf{Entities:} a teacher with a target policy $\pi^\dagger$, and an LBC learner with feature function $\phi$
	\end{flushleft}
	\begin{algorithmic}[1]
		\STATE  The teacher constructs a demonstration dataset ${D =\{(s,\pi^\dagger(s)) : s \in T \subseteq \cS\}}$ and sends it to the learner.
		\STATE  During learning, the learner receives $D$ and forms a version space $\VS(D)$ given by~\eqref{eqn:polyhedral_cone}.
		\STATE  During deployment, the learner arbitrarily chooses a policy $\pi_w$ from $\VS(D)$ as defined by~\eqref{eqn:policy_defn}. 
	\end{algorithmic} 
\end{algorithm}

\subsection{The Linear Behavior Cloning Learner}
Let $\cS$ be a finite state space and $\cA$ a finite action space.
The learner uses a feature function $\phi : \cS \times \cA \rightarrow \bR^d$. 
The learner maintains a version space of hypotheses, i.e. a set of weight vectors $w \in \bR^d$.
Each weight vector $w$ induces a set of linear policies $\pi_w$: 

\begin{equation}
\pi_w(s) \in \Delta\left({\arg \max_{a \in \cA} \langle w, \phi(s,a)\rangle}\right),\, \forall s \in \cS \label{eqn:policy_defn}
\end{equation}
where $\Delta(Z)$ denotes the probability simplex over set $Z$. 
If there are no ties in $\arg \max$, $\pi_w(s)$ is simply the best action at state $s$ according to $w$.
If there are ties, the learner chooses an arbitrary mixing (deterministic or stochastic) among the best actions.

The learner accepts a demonstration set $D = \{(s, a): s \in \cS, a \in \cA\}$.
Each item $(s,a)$ in the set means that the learner should strictly prefer action $a$ over all other actions in state $s$. 
Given a demonstration set $D$, the learner maintains a version space $\VS(D)$ of weights consistent with demonstration
\begin{align}
	\VS(D) &= \{w \in \bR^d : \pi_w(s) = a,\, \forall (s,a) \in D\}\\
	= \{&w\in \bR^d : \langle w, \psi_{sab}\rangle > 0,\, \forall (s,a) \in D, b\neq a\}. \label{eqn:polyhedral_cone}
\end{align}
where $\psi_{sab}:=\phi(s,a) -\phi(s,b)$ is the feature difference vector induced for strictly preferring action $a$ over action $b$ in state $s$.
We note that $\VS(D)$ is a polyhedral cone with open faces due to the strict inequalities.
For notational simplicity, we define $\Psi(D) = \{\psi_{sab} : (s,a) \in D, b \in \cA, b\neq a\}$ as the set of all feature difference vectors induced by $D$. Further, the primal cone $\Psi(D)$ (induced by dataset $D$) is defined as $\cP(\Psi(D)) := \{ \sum_{\psi \in \Psi(D)} \lambda_\psi \psi : \lambda_\psi \geq 0\}$, and its dual cone is defined as $\cD(\Psi(D)) := \{w \in \bR^d : \langle w, \psi \rangle > 0, \forall \psi \in \Psi(D), \psi \neq 0\}$. We note that by \eqref{eqn:polyhedral_cone}, $\VS(D)$ is the dual cone of $\Psi(D)$. 

\begin{figure}
	\centering
	\begin{subfigure}[b]{0.235\textwidth}
		\includegraphics[width=\textwidth]{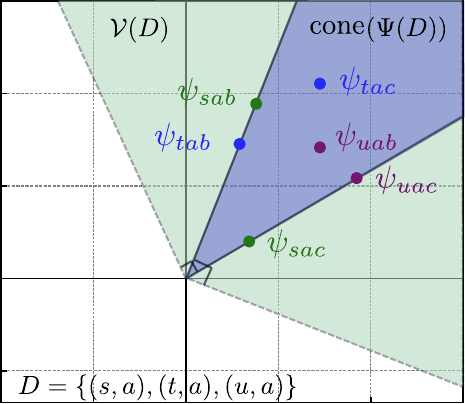}
		\caption{}
		\label{fig:primal_dual_cone}
	\end{subfigure}
	\begin{subfigure}[b]{0.235\textwidth}
		\includegraphics[width=\textwidth]{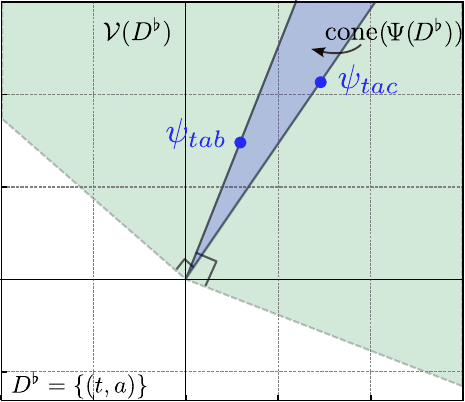}
		\caption{}
		\label{fig:non_teaching}
	\end{subfigure}
	\begin{subfigure}[b]{0.235\textwidth}
		\includegraphics[width=\textwidth]{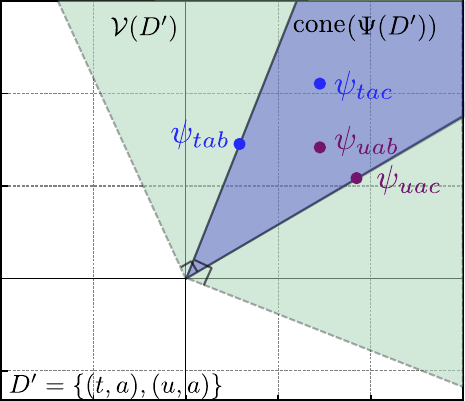}
		\caption{}
		\label{fig:suboptimal_teaching}
	\end{subfigure}
	\begin{subfigure}[b]{0.235\textwidth}
		\includegraphics[width=\textwidth]{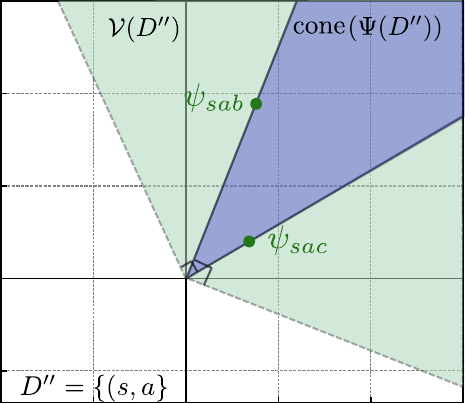}
		\caption{}
		\label{fig:optimal_teaching}
	\end{subfigure}
\caption{The importance of extreme rays.
$D, D', D"$  succeed but $D^\flat$ fails depending on if they cover the extreme rays of $\cP(\Psi(D))$.}
	\label{fig:teaching_diagram}
\end{figure}

\begin{example}[A simple LBC Learner in $\mathbb R^2$]
Let $\mathcal S = \{s, t, u\}$, $\mathcal A = \{a, b, c\}$, and the six feature difference vectors are indicated as dots in Figure~\ref{fig:primal_dual_cone}. 
Consider the full demonstration set $D=\{(s, a), (t, a),(u, a) \}$.
Then $\Psi(D)=\{\psi_{sab}, \psi_{sac}, \psi_{tab}, \psi_{tac}, \psi_{uab}, \psi_{uac}\}$,
the primal cone $\cP(\Psi(D))$ is shown in blue, and the version space $\VS(D)$ in green.  We note that the primal cone is supported by two extreme rays. 
The smaller demonstration set $D^\flat=\{(t, a)\}$ in Figure~\ref{fig:non_teaching} fails
since its primal cone is smaller and the corresponding dual version space is larger than $\VS(D)$, which contains $w$'s inconsistent with $D$. 
But note that two other smaller demonstration sets $D'=\{(t, a),(u, a) \}$ and $D''=\{(s,a)\}$ succeed,
since they each cover both extreme rays of $\cP(\Psi(D_\cS))$
(Figures~\ref{fig:suboptimal_teaching}~and~\ref{fig:optimal_teaching}).

\label{exp:2d_example}
\end{example}

After behavior cloning from the demonstration set $D$, the learner will be deployed.  During deployment, it arbitrarily picks a consistent hypothesis $w\in \VS(D)$ and then chooses a policy $\pi_w$ according to equation~\eqref{eqn:policy_defn}.

\subsection{The Teacher}
In our setup, a teacher wants to teach a deterministic target policy ${\pi^\dagger: \mathcal S \rightarrow \mathcal A}$ to the LBC learner.
We use $\pi^\dagger$ to make it clear that the teacher's target policy does not need to be optimal $\pi^*$ with respect to the underlying Markov Decision Process.
The teacher knows everything about the learner, including the feature function $\phi$.
The teacher controls the demonstration dataset $D$ provided to the learner.
\textbf{The teacher wants the learner to {unambiguously} learn the target policy $\pi^\dagger$ using as few demonstrations as possible.}
Unambiguity means that the teacher's demonstration set $D$ should be such that the learner's resulting version space $\VS(D)$ should contain only weights that strictly prefer action $\pi^\dagger(s)$ at each state $s$; there should be no ties otherwise the learner could have chosen an alternative action.
Minimum demonstration means the cardinality $|D|$ should be minimized. This is possible because the learner uses linear policies; demonstrating the target action at one state can constrain the learner's action at other states.
The minimum cardinality $|D|$ is known as the teaching dimension and is defined below.

Formally, an LBC teaching problem instance is defined by the tuple $(\cS, \cA, \phi, \pi^\dagger)$.

\begin{assumption}[Realizability]
The target policy $\pi^\dagger$ is realizable under the linear feature mapping $\phi$, i.e.
	\begin{align*}
		\exists w \in \mathbb R^d \text{ s.t. }\, \forall s \in \mathcal S, \{\pi^\dagger(s)\} = \underset{a \in \mathcal A}{\arg \max} \langle w, \phi(s,a)\rangle.
	\end{align*}
 \label{assum:realizability}
\end{assumption}
The realizability assumption is common in the optimal teaching literature~\cite{goldman1993teaching}.  We note that under realizability, $\psi_{s,\pi^\dagger(s),a} \neq 0,\, \forall a \neq \pi^\dagger(s)$.
Let 
\begin{equation}
{D_\cS=\{(s,\pi^\dagger(s)):s\in\cS\}}
\end{equation}
be the full demonstration set. 
Under realizability, $\VS(D_\cS)$ is a non-empty open cone in $\bR^d$. 
The teacher can certainly use $D_\cS$ to successfully teach the target policy $\pi^\dagger$ to the learner.
However, in general, $D_\cS$ is not the minimum demonstration set for $\pi^\dagger$. 
Practically speaking, many real-world applications have large state space $\cS$, and it will be undesirable to use the full demonstration set $D_\cS$.

To find a minimum demonstration set, the teacher solves the following optimal teaching problem:
\begin{eqnarray}
\min_{T \subseteq \cS} && |T| \label{eq:cardT} \label{eqn:optimal_teach} \\
\mbox{s.t.} && D_T = \{(t,\pi^\dagger(t)) : t \in T\} \label{eq:Dstar} \\
            && \forall w \in \VS(D_T), \forall s \in \cS: \pi_w(s)=\pi^\dagger(s). \label{eqn:feasibility}
\end{eqnarray}
The optimal solution $|T^*|$ of this optimization problem~\eqref{eq:cardT} is called the Teaching Dimension(TD) of the problem instance $(\cS, \cA,\phi,\pi^\dagger)$ and the corresponding set  in equation~\eqref{eq:Dstar} is the minimum teaching set $D^*=D_{T^*}$.
In other words, the teacher is required to find the smallest set of states $T$ and demonstrate the target policy $\pi^\dagger$ on $T$, so that the learner learns $\pi^\dagger$ on all states.

\begin{figure}
	\centering
	\includegraphics[width=0.35\textwidth]{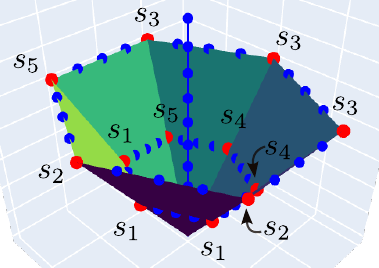}
	\caption{Optimal teaching can be hard as soon as $d>2$  where there can be an arbitrary number of extreme rays. Minimally covering all these extreme rays using a subset of states requires solving a set cover problem. In this figure each red dot $s_i$ represents a difference vector $\psi_{s_i,\pi^\dagger(s_i),b}$ for different $b$'s.
$\{s_1 \ldots s_6\}$, $\{s_2, s_3, s_5\}$ and $\{s_1,s_3\}$ are all successful teaching sets because they cover the extreme rays, but $\{s_1,s_3\}$ is the only optimal one. }
	\label{fig:optimal_teaching_3d}
\end{figure}

In Example~\ref{exp:2d_example}, we observe that $D$ is the full demonstration set under $\pi^\dagger$ i.e. $D=D_\cS$. The subset $D^\flat$ is not a valid/feasible teaching set, as its version space $\VS(D^\flat)$(shown in green in Figure~\ref{fig:non_teaching}) is wider than $\VS(D)$ and contains some $w$'s that do not induce $\pi^\dagger$ in all states, thus violating the feasibility condition in equation \ref{eqn:feasibility}. On the other hand both $D'$ and $D''$ induces the correct  version space $\VS(D_\cS)$ (as shown in green in Figures~\ref{fig:suboptimal_teaching}~and~\ref{fig:optimal_teaching}) on the learner and succeeds in teaching  $\pi^\dagger$ to it. Furthermore, $D''$ is the smallest set among them to do so, and hence it forms the optimal teaching set.

From this simple 2D example, finding the optimal set may look easy for the teacher. However, the problem quickly becomes challenging as we move to 3D or higher dimensions where there can be an arbitrary number of faces in the version space $\VS(D_\cS)$, and correspondingly $\cP(\Psi(D_\cS))$ with an arbitrary number of extreme rays, see Figure~\ref{fig:optimal_teaching_3d}. 
This difficulty is real in the sense that it leads to an NP-hard problem,
as we show in the next section.

\section{Algorithm and Analysis} 

\subsection{Teaching as an Infinite Set Cover Problem in \texorpdfstring{$w$}{w}}
Demonstrating $(s,\pi^\dagger(s))$ at a state $s$ implies $A-1$ inequalities: $\langle w, \phi(s,\pi^\dagger(s)) \rangle > \langle w, \phi(s,b) \rangle, \forall b\neq \pi^\dagger(s)$.
Each such inequality, rewritten as $\langle w, \psi_{s\pi^\dagger(s)b} \rangle > 0$, eliminates a halfspace $W_{sb}:= \{w : \langle w, \psi_{s\pi^\dagger(s)b} \rangle \leq 0\}$.
Therefore, the effect of demonstrating $(s,\pi^\dagger(s))$ is to eliminate the set of weights $W_s := \cup_{b \neq \pi^\dagger(s)} W_{sb}$.
The full demonstration set $D_\cS=\cup_{s\in\cS}\{(s,\pi^\dagger(s))\}$ over all states eliminates the union $\cup_{s\in\cS} W_s$, such that only the version space $\VS(D_\cS)$ survives.
We are interested in finding the smallest demonstration set that also produces $\VS(D_\cS)$.

We may study the problem from the opposite angle: let the universe be $\VS(D_\cS)^C$, namely the complement of the version space.
Each state demonstration $(s,\pi^\dagger(s))$ is associated with the subset $W_s$ that will be eliminated by this demonstration.
There are as many subsets as the number of states.
Then, we aim to find the smallest number of subsets whose union covers the universe.
This is a set cover problem:
\begin{eqnarray}
\min_{T\subseteq \cS}  && |T| \label{eq:infsetcover}\\
\mbox{s.t.} && \VS(D_\cS)^C = \cup_{t\in T} W_t.
\end{eqnarray}
However, both $\VS(D_\cS)^C$ and $W_t$ are infinite sets of weights.
It is not immediately clear how to solve this infinite set cover problem.
In the next section, we utilize the duality between the $w$ space (where the version space lives) and the feature space (where $\phi, \psi$ live) to convert this problem into a standard finite set cover problem. In particular, we will show that the problem is equivalent to covering the extreme rays of $\cP(\Psi(D_\cS))$.

\subsection{Teaching as a Set Cover Problem on the Extreme Rays of \texorpdfstring{$\cP(\Psi(D_\cS))$}{Psi(D)}}

We recall that the version space $\VS(D_\mathcal S)$ is induced by the feature difference set $\Psi(D_\cS)$.
In this section, we show that the set cover problem for the teacher can be simplified to covering only the extreme rays of $\cP(\Psi(D_\cS))$ using the feature difference vector induced by a subset of states $T \subseteq \cS$.

\begin{definition}[Extreme Ray and its Cover]
An open ray in $\bR^d$ is defined by the set $\cR = \{cv : c > 0\}$ for some $v\neq 0, v \in \bR^d$. 
A ray $\cR$ is called an extreme ray of a cone $K\subseteq \bR^d$ if for any $x,y\in K$, $x+y\in \cR \implies x,y\in \cR$.
We say that a state $s \in \cS$ covers a ray $\cR$ if $\exists b \neq \pi^\dagger(s) : \psi_{s\pi^\dagger(s)b} \in \cR$. Similarly, $T \subseteq \cS$ is said to cover $\cR$ if $\exists s \in T$ that covers $\cR$.
\end{definition}

We first show that it is not necessary to teach with the full demonstration set $D_\cS$ or, equivalently, the full set of induced difference vectors $\Psi(D_\cS)$.
Instead, we just need a subset $U \subseteq \Psi(D_\cS)$ that covers the extreme rays of $\cP(\Psi(D_\cS))$ as shown by Lemma~\ref{lem:extreme_ray_coverage_for_teaching}. Before we show that, we characterize the coverage condition of a cone in terms of its extreme rays in Lemma \ref{lem:extreme_ray_coverage_for_inducing_target_cone}. The proof for both the lemmas can be found in the appendix.

\begin{lemma}\label{lem:extreme_ray_coverage_for_inducing_target_cone}
	For any $U \subseteq \Psi(D_\cS), \cP(U) = \cP(\Psi(D_\cS))$ and correspondingly $\cD(U) = \cD(\Psi(D_\cS))$ if and only if $U$ contains at least one vector on each of the extreme rays of $\cP(\Psi(D_\cS))$. 
\end{lemma}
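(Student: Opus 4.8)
The plan is to reduce everything to the structure theory of finitely generated cones. First I would record the key structural fact: $\cP(\Psi(D_\cS))$ is generated by the finite set $\Psi(D_\cS)$, and under Assumption~\ref{assum:realizability} its dual $\VS(D_\cS)=\cD(\Psi(D_\cS))$ is a nonempty open cone, so there is a $w_0$ with $\langle w_0,\psi\rangle>0$ for every $\psi\in\Psi(D_\cS)$. This places the whole cone (minus the origin) in an open halfspace, hence $\cP(\Psi(D_\cS))$ is pointed. By the Minkowski--Weyl / finite-basis theorem, a pointed finitely generated cone equals the conical hull of its finitely many extreme rays, and each extreme ray is spanned by one of the generating vectors in $\Psi(D_\cS)$. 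This is the fact I will lean on in both directions.

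For the backward direction ($\Leftarrow$), suppose $U$ contains a vector on every extreme ray. Every generator $\psi\in\Psi(D_\cS)$ lies in $\cP(\Psi(D_\cS))$, hence is a nonnegative combination of vectors lying on the extreme rays; replacing each such ray vector by the positive multiple of the representative that $U$ supplies on that ray shows $\psi\in\cP(U)$. Thus $\Psi(D_\cS)\subseteq\cP(U)$ and so $\cP(\Psi(D_\cS))\subseteq\cP(U)$, while $U\subseteq\Psi(D_\cS)$ gives the reverse inclusion; hence $\cP(U)=\cP(\Psi(D_\cS))$.

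For the forward direction ($\Rightarrow$), suppose $\cP(U)=\cP(\Psi(D_\cS))$ and fix any extreme ray $\cR$ with generator $v\neq 0$. Since $v\in\cP(U)$, write $v=\sum_{\psi\in U}\lambda_\psi\psi$ with $\lambda_\psi\ge 0$. Each summand $\lambda_\psi\psi$ lies in $K:=\cP(\Psi(D_\cS))$, so applying the extreme-ray property $x+y\in\cR\Rightarrow x,y\in\cR$ inductively forces every nonzero summand onto $\cR$. As $v\neq 0$, at least one $\lambda_\psi\psi$ is nonzero, giving a $\psi\in U$ with $\psi\in\cR$; thus $U$ covers $\cR$. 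Since $\cR$ was arbitrary, $U$ covers all extreme rays.

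Finally, for the dual statement I would note that $\cD(U)$ depends only on the cone $\cP(U)$: the condition $\langle w,\psi\rangle>0$ for all nonzero $\psi\in U$ holds iff $\langle w,x\rangle>0$ for all nonzero $x\in\cP(U)$ (nonnegativity of the coefficients makes one direction immediate, the other is specialization to generators). Hence $\cP(U)=\cP(\Psi(D_\cS))$ yields $\cD(U)=\cD(\Psi(D_\cS))$ at once, and the converse follows from biduality of closed cones together with the observation that $\cD(\cdot)$ is exactly the interior of the standard closed dual $K^\circ=\{w:\langle w,x\rangle\ge 0\ \forall x\in K\}$ of a pointed cone. I expect the main obstacle to be the careful justification of two points: that realizability indeed makes $\cP(\Psi(D_\cS))$ pointed (so the extreme-ray decomposition is valid, since a non-pointed finitely generated cone such as a halfspace has no extreme rays), and the bookkeeping needed to pass between the open, strict-inequality dual $\cD$ used here and the standard closed dual when invoking biduality.
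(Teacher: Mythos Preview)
Your argument is correct. The backward direction ($U$ covers all extreme rays $\Rightarrow$ equality of cones) matches the paper's, both resting on the Minkowski--Weyl decomposition of a pointed polyhedral cone. The forward direction, however, is genuinely different. You argue directly from the defining property of an extreme ray: writing a generator $v\in\cR$ as a conic combination of elements of $U$ and peeling off summands forces some $\psi\in U$ onto $\cR$. The paper instead works by contraposition through the dual side and its LP machinery (Lemma~\ref{lem:extreme_rays}): if $U$ misses an extreme ray $\cR$ with $r\in\cR$, then LP-Dual$(r,U)$ is infeasible, hence LP$(r,U)$ is unbounded (feasibility coming from realizability/pointedness), producing an explicit $w$ with $\langle w,\psi\rangle>0$ for all $\psi\in U$ but $\langle w,r\rangle<0$, so $w\in\cD(U)\setminus\cD(\Psi(D_\cS))$. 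Your route is more elementary and self-contained, avoiding LP duality entirely; the paper's route is more constructive and dovetails with the algorithm, since the same LP is what \textbf{MinimalExtreme} solves. Two minor remarks: your biduality paragraph is not actually needed for the lemma as stated (once $\cP(U)=\cP(\Psi(D_\cS))$ is shown, the dual equality is immediate from your observation that $\cD$ depends only on the generated cone, which is exactly the paper's Proposition~\ref{eqn:prop1}/\ref{eqn:prop3}); and the inductive use of the extreme-ray property should be phrased with a little care since the paper's open ray excludes $0$, but your restriction to ``nonzero summands'' already handles this.
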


\begin{lemma}[Necessary and Sufficient condition for Teaching LBC Learner]\label{lem:extreme_ray_coverage_for_teaching}
	For successful teaching using $T \subseteq \cS$, the teacher needs to cover each extreme ray of $\cP(\Psi(D_\cS))$ using the feature difference vectors induced by teaching $\pi^\dagger$ on $T$.
\end{lemma}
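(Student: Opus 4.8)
The plan is to reduce the statement to Lemma~\ref{lem:extreme_ray_coverage_for_inducing_target_cone} by first recasting ``successful teaching'' as an equality of version spaces. Since the result is advertised as a necessary \emph{and} sufficient condition, I would prove the equivalence in full rather than just necessity. First I would observe that because $T \subseteq \cS$ we have $D_T \subseteq D_\cS$, hence $\Psi(D_T) \subseteq \Psi(D_\cS)$; adding inequality constraints can only shrink the set of consistent weights, so the inclusion $\VS(D_\cS) \subseteq \VS(D_T)$ holds unconditionally. The entire content of teaching is therefore the reverse inclusion.

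Next I would argue that the feasibility condition~\eqref{eqn:feasibility} --- that every $w \in \VS(D_T)$ induces $\pi^\dagger$ unambiguously at every state --- is exactly the statement $\VS(D_T) \subseteq \VS(D_\cS)$. Here I would lean on realizability (Assumption~\ref{assum:realizability}): under it, $\VS(D_\cS)$ is the open cone of weights that strictly prefer $\pi^\dagger(s)$ over every competing action at every $s$, i.e.\ precisely the weights for which $\pi_w = \pi^\dagger$ with no ties. Thus a weight $w$ satisfies~\eqref{eqn:feasibility} iff $w \in \VS(D_\cS)$, and quantifying over all $w \in \VS(D_T)$ yields $\VS(D_T) \subseteq \VS(D_\cS)$. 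Combined with the unconditional inclusion above, successful teaching using $T$ is equivalent to $\VS(D_T) = \VS(D_\cS)$.

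Then I would pass to the dual. Because $\VS(D) = \cD(\Psi(D))$ for every $D$, the equality $\VS(D_T) = \VS(D_\cS)$ is the same as $\cD(\Psi(D_T)) = \cD(\Psi(D_\cS))$. Applying Lemma~\ref{lem:extreme_ray_coverage_for_inducing_target_cone} with $U = \Psi(D_T) \subseteq \Psi(D_\cS)$, this holds iff $\Psi(D_T)$ contains at least one vector on each extreme ray of $\cP(\Psi(D_\cS))$. Finally I would unwind the definition of coverage: since $\Psi(D_T) = \{\psi_{s\pi^\dagger(s)b} : s \in T,\, b \neq \pi^\dagger(s)\}$, the set $\Psi(D_T)$ meets a ray $\cR$ iff some $s \in T$ covers $\cR$, i.e.\ iff $T$ covers $\cR$. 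Chaining these equivalences gives the claim.

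The hard part will be the translation in the second paragraph: I must be careful that the feasibility condition concerns the induced policies $\pi_w$, which may be stochastic when $\arg\max$ ties occur, whereas $\VS(D_\cS)$ is defined by strict inequalities. The key is to justify via realizability that requiring $\pi_w(s) = \pi^\dagger(s)$ for all admissible tie-breaks forces the $\arg\max$ at $s$ to be the singleton $\{\pi^\dagger(s)\}$, which is exactly the open (strict) constraint defining $\VS(D_\cS)$. Once this identification is pinned down, the remaining steps are immediate applications of the duality $\VS = \cD \circ \Psi$ and Lemma~\ref{lem:extreme_ray_coverage_for_inducing_target_cone}.
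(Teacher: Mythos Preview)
Your proposal is correct and follows essentially the same route as the paper's proof: establish the unconditional inclusion $\VS(D_\cS) \subseteq \VS(D_T)$, argue that successful teaching forces the reverse inclusion so that $\VS(D_T) = \VS(D_\cS)$, and then invoke Lemma~\ref{lem:extreme_ray_coverage_for_inducing_target_cone} via the identification $\VS(D) = \cD(\Psi(D))$. Your treatment is in fact more careful than the paper's sketch, particularly in spelling out why the feasibility condition~\eqref{eqn:feasibility} (with its tie-breaking semantics) is equivalent to the strict-inequality membership $\VS(D_T) \subseteq \VS(D_\cS)$ and in explicitly unwinding the definition of ``$T$ covers $\cR$'' at the end.
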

\begin{proof}[Proof Sketch]
	Teaching is successful if the learner can recover a non-empty subset of target consistent version space i.e. for a teaching subset of states $T \subseteq \cS, \VS(D_T) = \VS(D_\cS)$. We also have that $\VS(D_T) \supseteq \VS(D_\cS)$ by Definition \ref{eqn:polyhedral_cone} and hence for successful teaching, the teacher has to induce complete $\VS(D_\cS)$ on the learner. From Lemma \ref{lem:extreme_ray_coverage_for_inducing_target_cone},  teaching is successful i.e. the learner recovers the complete $\VS(D_\cS) = \cD(\Psi(D_\cS))$ if and only if the teacher is able to cover(induce at least one vector on) each extreme ray of $\cP(\Psi(D_\cS))$. 
\end{proof}
This means any subset of states $T \subseteq \cS$ will be a successful demonstration if $T$ induces difference vectors that cover the extreme rays of $\cP(\Psi(D_\cS))$. For example in Figure~\ref{fig:optimal_teaching_3d}, any subset of states that can induce one red point on each of the six extreme rays is necessary and sufficient for teaching. On the other hand, the vectors on faces or in the interior of the cone (shown by blue dots in Figure~\ref{fig:optimal_teaching_3d}) are not necessary to be induced for successful teaching and hence can be discarded. The subsets $\cS, \{s_2, s_4, s_5\}, \{s_1,s_3\}$ with $\{s_1,s_3\}$ are some of the valid teaching sets with $\{s_1,s_3\}$ being the optimal one. Thus, by being clever the teacher can drastically cut the teaching cost from 30 to 2.

Once the teacher has all extreme rays, it needs to find the smallest subset of states $T \subseteq \cS$ that cover all those extreme rays.
This is our new set cover problem: The universe is the set of all extreme rays of $\cP(\Psi(D_\cS))$;
Each state $s$ is associated with the subset of extreme rays that $s$ covers.
The teacher wants to choose the smallest number of states to cover the universe.
Unlike the infinite set cover problem~\eqref{eq:infsetcover}, this is a standard finite set cover problem.

\subsection{The Optimal Teaching Algorithm and Analysis}
With ideas from the previous section, we design a teaching algorithm with two steps:
\begin{enumerate}
\item Given a teaching problem $(\cS, \cA, \phi, \pi^\dagger)$, find the extreme rays of $\cP(\Psi(D_\cS))$.  This can be solved with a sequence of linear programs.
\item Find the smallest demonstration set that covers all these extreme rays.
\end{enumerate}
We first describe how to compute the extreme rays in Lemma~\ref{lem:extreme_rays}. The proof can be found in the appendix.
Given a set of vectors $\cX \in \bR^d$ and $x \in \cX$, the following linear program determines if $x$ is the only vector on an extreme ray of $\cP(\cX)$.
\begin{align} 
	\text{LP}(x, \cX) : \qquad \min_{w} \quad & \langle w, x \rangle \label{eqn:lp}\\
	\textrm{ s.t. }  \langle w, x' \rangle \geq 1 &\,\quad \forall x' \in \cX \backslash \{x\}
\end{align}

\begin{lemma}[Extreme Ray Test]\label{lem:extreme_rays}
The objective value of LP$(x,\cX)$ is $-\infty$ if $x \notin \cP(\cX \backslash \{x\})$, and  
strictly positive otherwise.
\end{lemma}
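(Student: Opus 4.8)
The plan is to split on the dichotomy in the statement and to exploit that $\cP(\cX \backslash \{x\})$ is a finitely generated, hence closed, convex cone. Throughout I will rely on the feasible region of LP$(x,\cX)$ being non-empty: under realizability the dual cone $\cD(\Psi(D_\cS)) = \VS(D_\cS)$ is non-empty, so some $w^*$ has $\langle w^*, \psi \rangle > 0$ for every $\psi \in \Psi(D_\cS)$, and scaling $w^*$ by a large positive constant yields $\langle w^*, x' \rangle \geq 1$ for all $x' \in \cX \backslash \{x\}$, a feasible point. (More generally the LP is feasible precisely when $0 \notin \mathrm{conv}(\cX \backslash \{x\})$, by a Gordan-type separation argument.)

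First I would handle the case $x \in \cP(\cX \backslash \{x\})$ and show the optimum is strictly positive. Write $x = \sum_{x' \in \cX \backslash \{x\}} \lambda_{x'} x'$ with $\lambda_{x'} \geq 0$. For any feasible $w$ the constraints give $\langle w, x' \rangle \geq 1$, so
\[
	\langle w, x \rangle = \sum_{x'} \lambda_{x'} \langle w, x' \rangle \geq \sum_{x'} \lambda_{x'}.
\]
Since the feature difference vectors are non-zero (under realizability $\psi_{s\pi^\dagger(s)b} \neq 0$), we have $x \neq 0$, so not all $\lambda_{x'}$ vanish and $\sum_{x'} \lambda_{x'} > 0$. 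Hence every feasible objective value is bounded below by the positive constant $\sum_{x'} \lambda_{x'}$, and as the feasible region is non-empty the optimal value is finite and strictly positive.

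Next I would treat $x \notin \cP(\cX \backslash \{x\})$ and show the LP is unbounded below. Because $\cP(\cX \backslash \{x\})$ is a closed convex cone not containing $x$, the separating-hyperplane theorem for a point and a closed convex cone produces a direction $w_0$ with $\langle w_0, x \rangle < 0$ and $\langle w_0, x' \rangle \geq 0$ for every $x' \in \cX \backslash \{x\}$; the nonnegative sign on the cone side is forced by scaling cone elements and letting the scale tend to infinity. Taking the feasible point $\bar w$ above and moving along the ray $w_t = \bar w + t w_0$ for $t \geq 0$ preserves feasibility, since $\langle w_t, x' \rangle = \langle \bar w, x' \rangle + t \langle w_0, x' \rangle \geq 1$, while $\langle w_t, x \rangle = \langle \bar w, x \rangle + t \langle w_0, x \rangle \to -\infty$. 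Thus the objective value is $-\infty$.

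The main obstacle is the unbounded case: I must justify that $\cP(\cX \backslash \{x\})$ is closed so that the separation theorem yields a genuine separator rather than merely a supporting direction — this is exactly Minkowski--Weyl for finitely generated cones — and that the separator's sign can be pushed to $\langle w_0, x' \rangle \geq 0$ using the conic structure. Everything else reduces to the elementary conic-combination inequality and the existence of a feasible point, both immediate once feasibility is established.
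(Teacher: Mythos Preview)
Your argument is correct. The paper proceeds along a parallel but more \emph{duality-forward} route: it writes down the dual LP
\[
\max_{\lambda \ge 0} \sum_{x' \in \cX \setminus \{x\}} \lambda_{x'} \quad \text{s.t.} \quad \sum_{x' \in \cX \setminus \{x\}} \lambda_{x'} x' = x,
\]
and then reads off the dichotomy from LP strong duality. If the dual is infeasible (exactly the statement $x \notin \cP(\cX \setminus \{x\})$), the primal---already shown feasible---must be unbounded; if the dual is feasible, both optima coincide and equal $\sum \lambda_{x'}$, which is strictly positive because $x \neq 0$ rules out $\lambda \equiv 0$. Your first case is literally weak duality carried out by hand, and your second case replaces the appeal to strong duality by a separating-hyperplane / Farkas-type argument plus an explicit recession direction. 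The two approaches are equivalent at the level of ideas; the paper's version is a line or two shorter once LP duality is granted, while yours is more self-contained and surfaces the geometric content (closedness of finitely generated cones via Minkowski--Weyl, the ray $\bar w + t w_0$) that LP duality hides.
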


This test allows us to define an iterative elimination procedure to find a unique representative for all the extreme rays of $\cP(\cX)$.
If LP$(x,\cX)=-\infty$, $x$ is the only vector on some extreme ray of $\cP(\cX)$ and we should keep $x$.
If LP$(x,\cX)>0$, either $x$ is not on any extreme ray, or there is some other vector $x' \in \cX \backslash \{x\}$ that is also on the same extreme ray.
In both cases, we can eliminate $x$.
When the procedure terminates, the surviving 
subset $\cX^* \subseteq \cX$ contains exactly one vector on each extreme ray of $\cP(\cX)$.
We apply this iterative elimination procedure with $\cX=\Psi(D_\cS)$ to obtain the extreme rays for $\cP(\Psi(D_\cS))$.

Next, we note that each state $s\in \cS$ induces a set of feature difference vectors $\{\psi_{s\pi^\dagger(s)b} : b \in \cA, b\neq \pi^\dagger(s)\}$ which can cover at most $|\cA|-1$ extreme rays. The goal of the teacher is to find the minimal number of states that can cover all the extreme rays.  The teacher does that by solving a standard set cover problem.

The complete algorithm for finding the optimal LBC demonstration set is given in Algorithm~\ref{alg:TIE}.

\floatname{algorithm}{Algorithm}
\begin{algorithm}[H]
	\begin{flushleft}
		\textbf{def MinimalExtreme($\cX$): }
	\end{flushleft}
	\begin{algorithmic}[1]
		\FOR {each $x_j \in \cX$} 
		\STATE Solve LP($x_j, \cX/\{x\}$) defined by \ref{eqn:lp}
		\IF{$v_j > 0$}
		\STATE $\mathcal X \leftarrow \mathcal X \backslash x_j$	\hfill \text{ $\triangleright$ eliminate $x_j$ if not necessary}
		\ENDIF
		\ENDFOR
		\STATE return $\cX$ 	\hfill \text{ $\triangleright$ extreme vectors}
	\end{algorithmic}
	\label{alg:minimal_extreme}
	
	\begin{flushleft}
		\textbf{def OptimalTeach($\cS, \cA, \pi^\dagger, \phi$): }\\
	\end{flushleft}
	\begin{algorithmic}[1]
		\STATE let $\Psi(D_\cS) = \{\psi_{s\pi^\dagger(s)b}  \in \bR^d : s\in \cS, b\in\cA, b \neq \pi^\dagger(s)\}$ 
		\STATE $\Psi^* \leftarrow$ \textbf{MinimalExtreme}$(\Psi(D_\cS) )$
		\FOR{$s \in \mathcal S$}
		\STATE $V_s \leftarrow \left\{\psi \in \Psi^*: \exists \psi_{s\pi^\dagger(s)b} \in \Psi(D_\cS), {\hat \psi_{s\pi^\dagger(s)b}}  =  {\hat \psi} \right\}$ \hfill \text{$\triangleright\, $extreme rays covered by $s$ where $\hat x = {\frac{x}{||x||}}$.}
		\ENDFOR
		\STATE $\{V_s : s \in T^* \subseteq \cS\} \leftarrow \text{SetCover}(\Psi^*, \{V_s\}|_{s \in \mathcal S})$ \\
		\hfill $\triangleright T^*$ is smallest subset that covers all extreme rays
		\STATE teach $D^* = \{(t,\pi^\dagger(t)) : t \in T^*\}$ to the agent \hfill \text{ $\triangleright\, D^*$ is the minimum demonstration set}
	\end{algorithmic} 
	\caption{Teach using Iterative Elimination (TIE) to find the optimal LBC demonstration set}
	\label{alg:TIE}
\end{algorithm}

\begin{theorem}[Teaching Dimension]
For any LBC teaching problem instance $(\cS,\cA,\phi,\pi^\dagger)$, under realizability Assumption~\ref{assum:realizability},
Algorithm~\ref{alg:TIE} correctly finds the minimum demonstration set $D^*$,
and $|D^*|$ is the teaching dimension.
\label{thm:optimal_teaching}
\end{theorem}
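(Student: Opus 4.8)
The plan is to prove the two halves of the statement separately: first that the demonstration set $D^*$ returned by Algorithm~\ref{alg:TIE} is \emph{feasible} (it successfully teaches $\pi^\dagger$), and second that it has \emph{minimum} cardinality, so that $|D^*|$ equals the teaching dimension. The whole argument chains the three earlier lemmas: Lemma~\ref{lem:extreme_rays} certifies that the subroutine \textbf{MinimalExtreme} correctly extracts the extreme rays of $\cP(\Psi(D_\cS))$; Lemmas~\ref{lem:extreme_ray_coverage_for_inducing_target_cone} and~\ref{lem:extreme_ray_coverage_for_teaching} together reduce feasible teaching to covering those extreme rays; and the fact that the final step solves a set cover instance exactly turns minimum teaching into minimum set cover.

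First I would verify that when \textbf{MinimalExtreme}$(\Psi(D_\cS))$ terminates, the surviving set $\Psi^*$ contains exactly one vector on each extreme ray of $\cP(\Psi(D_\cS))$ and nothing else. The key is an elimination invariant: whenever the LP value is positive, Lemma~\ref{lem:extreme_rays} gives $x \in \cP(\cX \backslash \{x\})$, so $x$ is a redundant generator and $\cP(\cX \backslash \{x\}) = \cP(\cX)$; hence every elimination preserves the primal cone, and by Lemma~\ref{lem:extreme_ray_coverage_for_inducing_target_cone} it preserves the collection of extreme rays. Conversely, a vector is kept exactly when $x \notin \cP(\cX \backslash \{x\})$, which forces $x$ to lie on an extreme ray. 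I would then argue that at most one vector per extreme ray survives: if two vectors lay on a common ray, the second one tested would be a positive multiple of (hence in the conical hull of) the first and would be eliminated, so together with cone preservation this yields exactly one representative per ray. This is the step I expect to require the most care, because the LP test is taken against the \emph{current} set $\cX$, so I must make the invariant robust to the elimination order and confirm that ``on an extreme ray'' --- an intrinsic property of the invariant cone --- is not disturbed by later removals.

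Next I would establish the equivalence between the teaching problem~\eqref{eq:cardT} and the finite set cover instance built in Algorithm~\ref{alg:TIE}. Feasibility condition~\eqref{eqn:feasibility} is equivalent to $\VS(D_T) = \VS(D_\cS)$, since $\VS(D_T) \supseteq \VS(D_\cS)$ always holds and $\VS(D_\cS)$ is exactly the set of weights inducing $\pi^\dagger$ at every state. By Lemma~\ref{lem:extreme_ray_coverage_for_teaching} this holds if and only if the states in $T$ cover every extreme ray of $\cP(\Psi(D_\cS))$. The sets $V_s$ computed in the loop record precisely which representatives in $\Psi^*$ a state $s$ covers --- the normalization $\hat\psi = \psi/\|\psi\|$ detects ``same ray'' --- so $T$ covers all extreme rays iff $\{V_s : s \in T\}$ covers the universe $\Psi^*$. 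Thus the collection of feasible teaching sets coincides with the collection of covers of this instance.

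Finally, since the call $\text{SetCover}(\Psi^*, \{V_s\})$ returns a minimum-cardinality cover $T^*$, the equivalence makes $D^* = D_{T^*}$ a feasible teaching set of smallest possible size: its feasibility is exactly cone preservation plus full extreme-ray coverage, and its minimality is the minimality of the set cover. Hence $|D^*| = |T^*|$ is the teaching dimension, proving the theorem. I would close by noting that this argument presumes the \texttt{SetCover} routine is solved exactly; the computational intractability this hides --- and a $\log(|\cA|-1)$-approximate replacement --- is addressed separately.
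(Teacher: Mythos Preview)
Your proposal is correct and follows essentially the same route as the paper: use Lemma~\ref{lem:extreme_rays} to certify that \textbf{MinimalExtreme} returns one representative per extreme ray of $\cP(\Psi(D_\cS))$, invoke Lemma~\ref{lem:extreme_ray_coverage_for_teaching} to identify feasible teaching sets with extreme-ray covers, and conclude by exact set cover. One small slip: in your ``at most one per ray'' argument it is the \emph{earlier}-tested vector that gets eliminated (since the other one is still present in the current $\cX$ and generates it), not the later one; the conclusion that exactly one representative survives is unchanged.
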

\begin{proof}[Proof]
	Lemma~\ref{lem:extreme_ray_coverage_for_teaching} tells us that for a valid teaching, the teacher must induce at least one feature difference vector on each of the extreme rays of $\cP(\Psi(D_\cS))$. Our teaching algorithm first finds unique representatives for each extreme ray of $\cP(\Psi(D_\cS))$ by using an iterative elimination \textbf{MinimalExtreme} procedure in Algorithm \ref{alg:TIE}. Next, it find the smallest subset of states $T^* \subseteq \cS$ that can cover all the extreme rays using a set cover problem defined in line 4-7 of \textbf{OptimalTeach} procedure in Algorithm \ref{alg:TIE}. The optimal solution of set cover $T^* \subseteq \cS$ forms the optimal solution for teaching problem as well. 
\end{proof}

TIE involves solving a set cover problem which is NP-hard to solve in general. We show that no optimal algorithm can avoid this hardness by proving that optimal teaching LBC is an NP-hard problem.

\begin{theorem}
Finding a minimum demonstration set of LBC is NP-hard.
\end{theorem}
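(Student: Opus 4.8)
The plan is to prove NP-hardness by a polynomial-time reduction from the classical \textsc{Set Cover} decision problem, exploiting the equivalence established above. By Lemma~\ref{lem:extreme_ray_coverage_for_teaching}, teaching $\pi^\dagger$ with a state subset $T$ succeeds if and only if the feature difference vectors induced by $T$ cover every extreme ray of $\cP(\Psi(D_\cS))$; hence the teaching dimension of an instance equals the optimum of the finite set cover over extreme rays described in the previous subsection. Consequently, if I can build, from any set cover instance, an LBC teaching instance whose extreme rays are in bijection with the universe and whose states correspond to the given subsets, covering exactly the matching rays, then computing the teaching dimension solves set cover.

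Concretely, let the universe be $\{e_1,\dots,e_m\}$ with subsets $S_1,\dots,S_n$. I would work in $\bR^3$ and choose $m$ vectors $v_1,\dots,v_m$ in strictly convex position inside the open halfspace $\{x : \langle x, e_3\rangle > 0\}$, for instance $v_i=(\cos\theta_i,\sin\theta_i,1)$ for distinct $\theta_i\in(0,\pi/2)$. Each $v_i$ is then an extreme ray of $\cP(\{v_1,\dots,v_m\})$, and this cone is pointed, so its dual (the prospective version space) is nonempty. I associate the ray $\cR_i$ through $v_i$ with element $e_i$. For each subset $S_j$ I create a state $s_j$ with a common target action $a^\dagger$ and, through a freely designed $\phi$, set the difference vectors $\psi_{s_j a^\dagger b}=\phi(s_j,a^\dagger)-\phi(s_j,b)$ so that, for $|S_j|$ of the non-target actions, they equal the $v_i$ with $e_i\in S_j$, while the remaining non-target actions receive a fixed strictly interior direction such as $\sum_{i} v_i$. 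Taking $|\cA|-1\ge\max_j|S_j|$ makes this assignment possible; the interior fillers lie on no extreme ray and contribute none. Realizability (Assumption~\ref{assum:realizability}) holds because every induced difference vector lies in the pointed cone, so any $w^\dagger$ in the interior of its dual strictly prefers $a^\dagger$ at each $s_j$.

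By construction the extreme rays of $\cP(\Psi(D_\cS))$ are exactly $\{\cR_1,\dots,\cR_m\}$, and state $s_j$ covers $\cR_i$ precisely when $e_i\in S_j$. Thus a state set $T$ teaches $\pi^\dagger$ if and only if $\{S_j : s_j\in T\}$ covers the universe, so the teaching dimension equals the minimum set cover size. Since the reduction is polynomial and \textsc{Set Cover} is NP-hard, deciding whether the teaching dimension is at most $k$ is NP-hard, which proves the claim.

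The main obstacle I anticipate is the \emph{faithfulness} of this geometric gadget rather than the reduction skeleton. I must certify that the $v_i$ are genuinely extreme (true convex position), that the filler vectors are \emph{strictly} interior so that they neither create spurious extreme rays nor accidentally land on some $\cR_i$, and that the constructed $\phi$ simultaneously realizes all prescribed difference vectors while keeping $\pi^\dagger$ realizable under a single weight. Once these incidence and realizability checks are verified, the bijection between extreme-ray cover and set cover is immediate and the hardness conclusion follows.
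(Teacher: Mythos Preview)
Your proposal is correct and follows essentially the same route as the paper: a polynomial-time reduction from \textsc{Set Cover} in which universe elements become extreme rays of a pointed cone in $\bR^3$ (points on a circular arc lifted to a fixed positive height) and each subset becomes a state whose induced difference vectors hit exactly the corresponding rays, so that teaching-dimension equals minimum cover size via Lemma~\ref{lem:extreme_ray_coverage_for_teaching}. The only cosmetic difference is that you pad the surplus non-target actions with a strictly interior filler vector $\sum_i v_i$, whereas the paper pads by repeating one of the already-assigned extreme vectors; both choices leave the extreme-ray set unchanged, and the concerns you flag (convex position of the $v_i$, strict interiority of the filler, realizing $\phi$ from prescribed differences, and realizability via e.g.\ $w^\dagger=(0,0,1)$) are all straightforwardly discharged.
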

\begin{proof}
    We provide a poly-time reduction from the set cover problem to the optimal teaching LBC problem \ref{eqn:optimal_teach}. Since the set cover is an NP-hard problem, this implies that optimal teaching is NP-hard to solve as well. 
Let $P = (U, \{V_i\}_{i\in[n]})$ be an instance of set cover problem  where $U$ is the universe and $\{V_i\}_{i\in[n]}$ is a collection of subsets of $U$. We transform $P$ into an instance of optimal teaching problem $Q = (\mathcal S, \mathcal A, \pi^\dagger, \phi)$. 

Construction: For each subset $V_i$ of $P$, we create a state $s_i$ of $Q$. For each element $k$ in the universe $U$ of $P$, we create an extreme ray vector $\psi_k$ of feature difference vectors in $Q$. The complete construction is given as follows :

\begin{enumerate}
	\item $\mathcal S = [n], \mathcal A = \left[A\right]$ where $A =  \max_{i \in [n]} |V_i| + 1$.
	\item The target policy is $\pi^\dagger(s) = A,\, \forall s \in \cS$.
	\item $\Psi = \{\psi_k = (\cos({\frac{2\pi k}{n}}), \sin(\frac{2\pi k}{n}), 10) : k \in [|U|]\}$.
	\item for each $s \in \mathcal S$ we construct feature vectors $\{\phi(s,a) : a \in \mathcal A\}$ such that the feature differences map to extreme rays $\psi$'s. Enumerating over element of $V_s := \{V_{s1},\cdots, V_{s|V_s|}\}$, we define the induced feature difference vectors as,
\begin{align}
	\psi_{sAb} &= \psi_{V_{sb}},\, \hspace*{0.35cm}\forall b < |V_s|-1 \\ 
	\psi_{sAb} &= \psi_{V_{s|V_s|}},\, \forall  |V_s|-1\leq b \leq A-1
\end{align}
\end{enumerate}

\begin{figure}[H]
	\centering
	\includegraphics[width=0.65\textwidth]{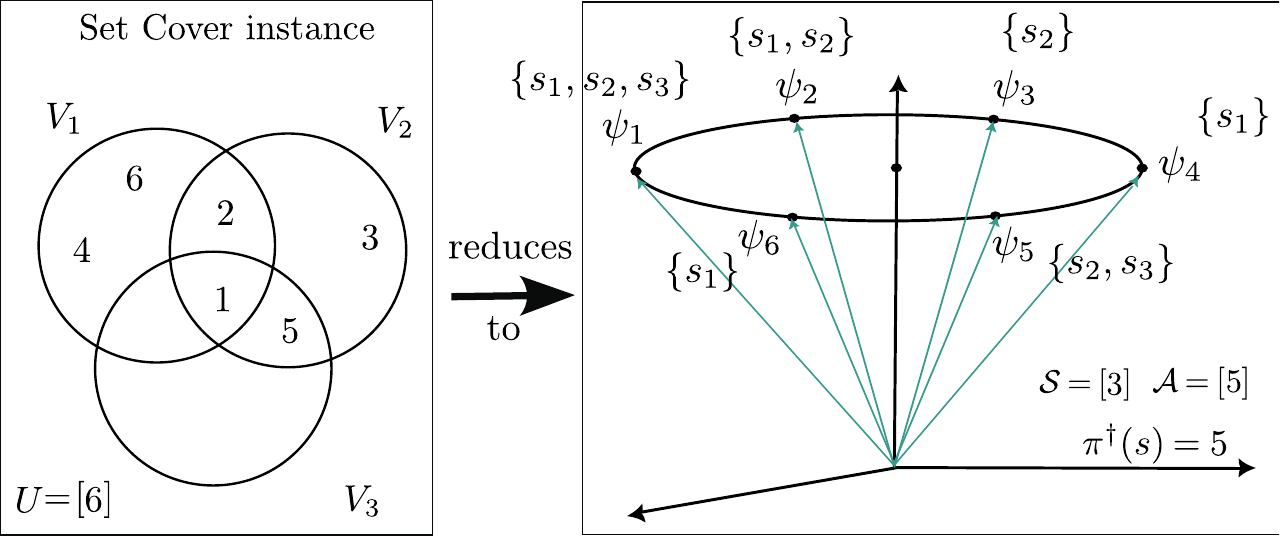}
	\caption{An reduction example from a set cover problem to optimal teaching LBC problem}
	\label{fig:teaching_hardness}
\end{figure}

\begin{claim}
	A solution of optimal teaching LBC instance  $(\mathcal S, \mathcal A, \pi^\dagger, \phi)$ gives a solution to the set cover problem $(U, \{V_i\}_{i\in[n]})$ and vice versa.
\end{claim}
Finding a collection of subsets $\{V_i\}_{i\in [n]}$ of smallest size that covers all elements in universe $U$ is equivalent to selecting a subset of states $\mathcal S$ of smallest size that covers all the extreme rays defined by $\Psi$. 
	
For a solution $\{V_j\}_{j \in T^*} \text{ s.t. } T^* \subseteq [n]$ to the set cover instance $(U, \{V_i\}_{i\in[n]})$, the set of states indexed by $T^* \subseteq \mathcal S$ is a solution to the optimal teaching instance $(\mathcal S, \mathcal A, \pi^\dagger, \phi)$ and vice versa. The argument follows from a direct translation between two instances. See Figure~\ref{fig:teaching_hardness}.
\end{proof}

With this hardness reduction, we have no hope of finding an efficient algorithm for optimally teaching LBC. However, we observe that the only ``hard'' part of TIE algorithm is to solve the set cover problem in line 6 of OptimalTeach procedure in Algorithm~\ref{alg:TIE}. The other main step of finding the extreme rays using a sequence of linear programs is efficiently solvable. This immediately suggests using an approximate algorithm to solve the set cover problem, which gives us an efficient but slightly suboptimal algorithm for optimal teaching.

We use the standard greedy algorithm for set cover which selects the next state that covers the most number of uncovered extreme rays.
It is well known that this greedy algorithm achieves an approximation ratio of $\log$(largest subset cardinality) \cite{setCover2005}.
For our problem, the cardinality of any subset is upper bounded by the number of extreme rays a single state can cover, which in turn is upper bounded by $|\cA|-1$ which is the maximum number of difference vectors that the state can produce. 
This gives us the following guarantee for efficiently solving optimal teaching problem.

\label{thm:approximate_optimal_teaching}
\begin{corollary}[Approximating Optimal Teaching]
Algorithm~\ref{alg:TIE} using the greedy set cover algorithm efficiently finds an approximate optimal teaching set $\tilde D$ such that
$|\tilde D| \leq \log(|\cA|-1) |D^*|$.
\end{corollary}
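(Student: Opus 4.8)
The plan is to treat the corollary as a direct consequence of three facts already at hand: the exact equivalence between optimal teaching and finite set cover established earlier, the classical approximation guarantee for the greedy set cover heuristic, and a simple counting bound on how many extreme rays a single state can cover. First I would recall from Lemma~\ref{lem:extreme_ray_coverage_for_teaching} and Theorem~\ref{thm:optimal_teaching} that optimal teaching is \emph{exactly} a finite set cover instance: the universe is the set of extreme rays of $\cP(\Psi(D_\cS))$ (the output $\Psi^*$ of \textbf{MinimalExtreme}), and each state $s\in\cS$ is associated with the subset $V_s\subseteq\Psi^*$ of extreme rays it covers. Under this identification the optimal set cover value equals the teaching dimension $|D^*|$, and the greedy solution $T^*$ returned in \textbf{OptimalTeach} yields the teaching set $\tilde D=\{(t,\pi^\dagger(t)):t\in T^*\}$ with $|\tilde D|=|T^*|$.

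Second I would invoke the standard analysis of greedy set cover: if every subset in the instance has cardinality at most $m$, then the greedy rule (repeatedly choosing the set covering the most yet-uncovered elements) returns a cover within a factor $H_m=\sum_{i=1}^m 1/i \le \ln m + 1$ of the optimum \cite{setCover2005}. Thus it suffices to bound the largest $|V_s|$ appearing in our instance.

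The key counting step is the bound $|V_s|\le |\cA|-1$ for every state $s$. Here I would observe that $s$ induces exactly the feature difference vectors $\{\psi_{s\pi^\dagger(s)b}: b\in\cA,\ b\neq\pi^\dagger(s)\}$, of which there are at most $|\cA|-1$. Since each such vector lies on at most one ray, $s$ can place a representative on at most $|\cA|-1$ distinct extreme rays, so $|V_s|\le|\cA|-1$. Combining this with the greedy guarantee gives $|\tilde D|=|T^*|\le \log(|\cA|-1)\,|D^*|$, as claimed.

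Finally, for the efficiency claim I would note that \textbf{MinimalExtreme} solves one linear program per vector in $\Psi(D_\cS)$ (polynomial in $|\cS|$, $|\cA|$, and $d$ by Lemma~\ref{lem:extreme_rays}), the resulting set cover instance has polynomial size, and greedy set cover runs in polynomial time, so the whole procedure is efficient. I do not expect a genuine obstacle here; the one point requiring care is making the identification between the set cover optimum and the teaching dimension precise, so that the approximation ratio transfers verbatim, together with the mild observation that ties or repeated and non-extreme difference vectors can only \emph{decrease} $|V_s|$, which preserves rather than threatens the $|\cA|-1$ upper bound.
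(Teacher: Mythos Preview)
Your proposal is correct and follows essentially the same argument as the paper: the paper justifies the corollary in the preceding paragraph by invoking the greedy set cover approximation ratio of $\log(\text{largest subset cardinality})$ from \cite{setCover2005} and bounding that cardinality by $|\cA|-1$ via the number of feature difference vectors a single state induces. Your added remarks on efficiency and on ties/non-extreme vectors only decreasing $|V_s|$ are harmless elaborations of the same reasoning.
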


\section{Experiments}

We present two LBC teaching problems to demonstrate the effectiveness of the TIE algorithm. 

\subsection{Pick the Right Diamond}
\label{exp:teach_to_pick_the_diamond} 

Recall the game from Example~\ref{exp:pick_the_diamond_in_intro}.
The state space is $\mathcal S = \{\hexagon,\pentagon,\square, \triangle,o\}^n / \{o\}^n$ where $o$ represents an empty slot, and the action space is $\mathcal A = [n]$. The learner uses a 2D feature function given as follows, 
\begin{align}
	\phi(s,a) = [a, \# \textrm{edges of diamond at } a]. \label{eqn:feature_fn_pick_the_diamond}
\end{align}
where $[\# \textrm{edges of diamond at } a]$ is 0 if there is not diamond at that slot.

\begin{figure}[H]
	\begin{subfigure}{0.5\textwidth}
		\includegraphics[width=0.9\textwidth]{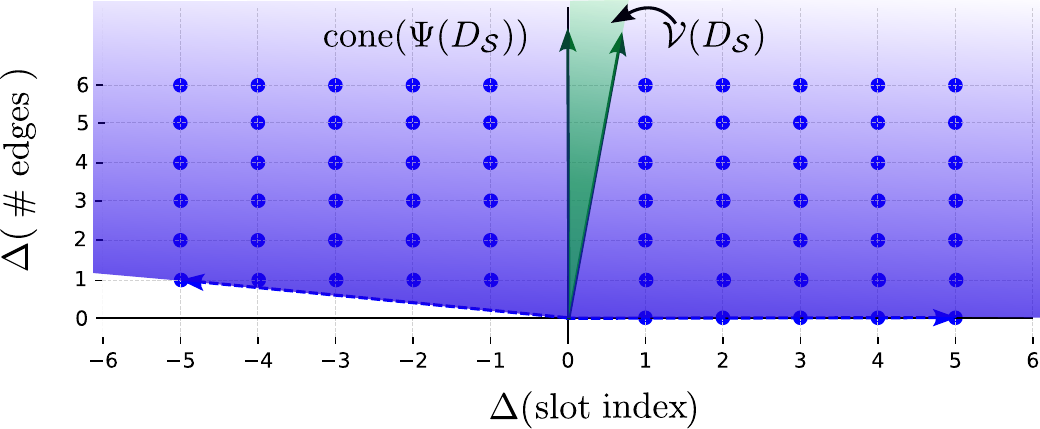}
		\caption{}
		\label{fig:example_pick_the_diamond_cone}
	\end{subfigure}
	\begin{subfigure}{0.5\textwidth}
		\centering
		\includegraphics[width=0.9\textwidth]{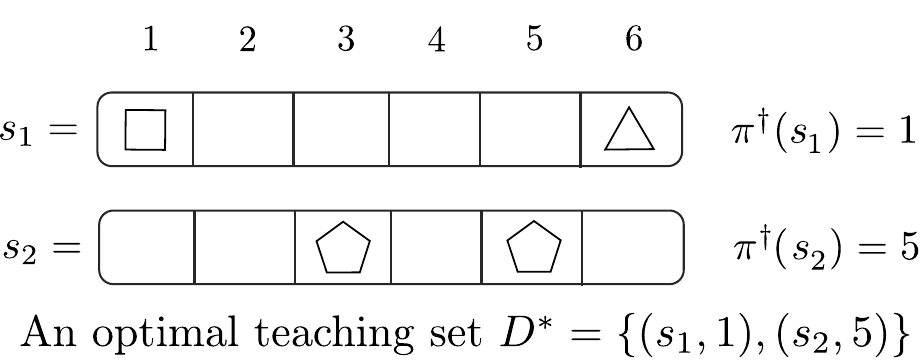}
		\caption{}
		\label{fig:teaching_set_on_pick_the_diamond}
	\end{subfigure}	
	\caption{Optimal teaching in ``pick the right diamond'' with $n=6$ slots.
	(a) Feature difference vectors $\Psi(D_\cS)$ as blue dots,
	primal cone $\cP(\Psi(D_\cS))$ as blue area, and dual version space $\VS(D_\cS)$ as green area.
	(b) One minimum optimal teaching set with two states.}
\end{figure}

\begin{figure}
	\centering
	\begin{subfigure}{0.4\textwidth}
		\centering
		\includegraphics[width=\textwidth]{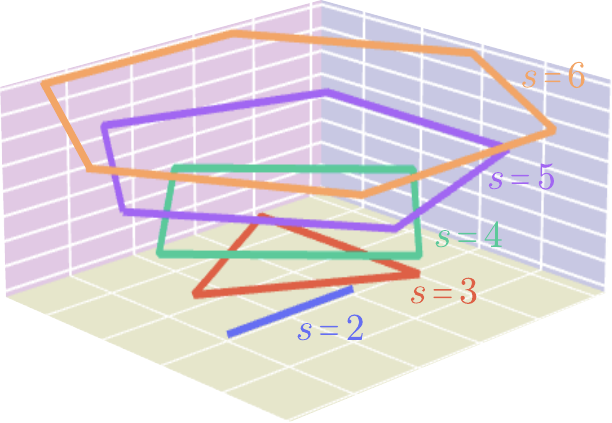}
		\caption{}
	\end{subfigure}
	\begin{subfigure}{0.35\textwidth}
		\centering
		\includegraphics[width=\textwidth]{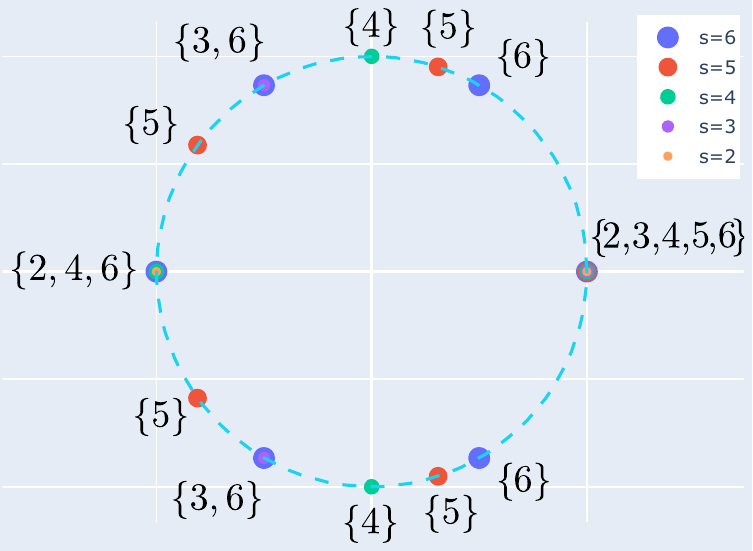}
		\caption{}
	\end{subfigure}
	
	\begin{subfigure}{0.35\textwidth}
		\centering
		\includegraphics[width=\textwidth]{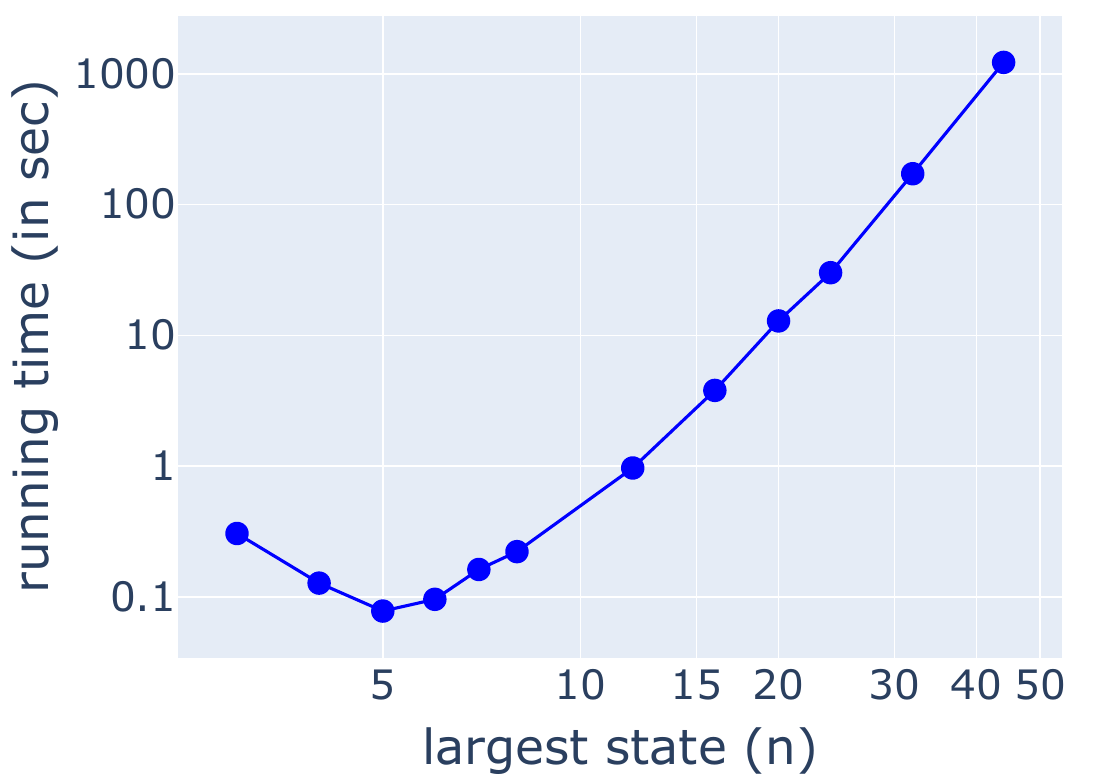}
		\caption{}
	\end{subfigure}
	\begin{subfigure}{0.35\textwidth}
		\centering
		\includegraphics[width=\textwidth]{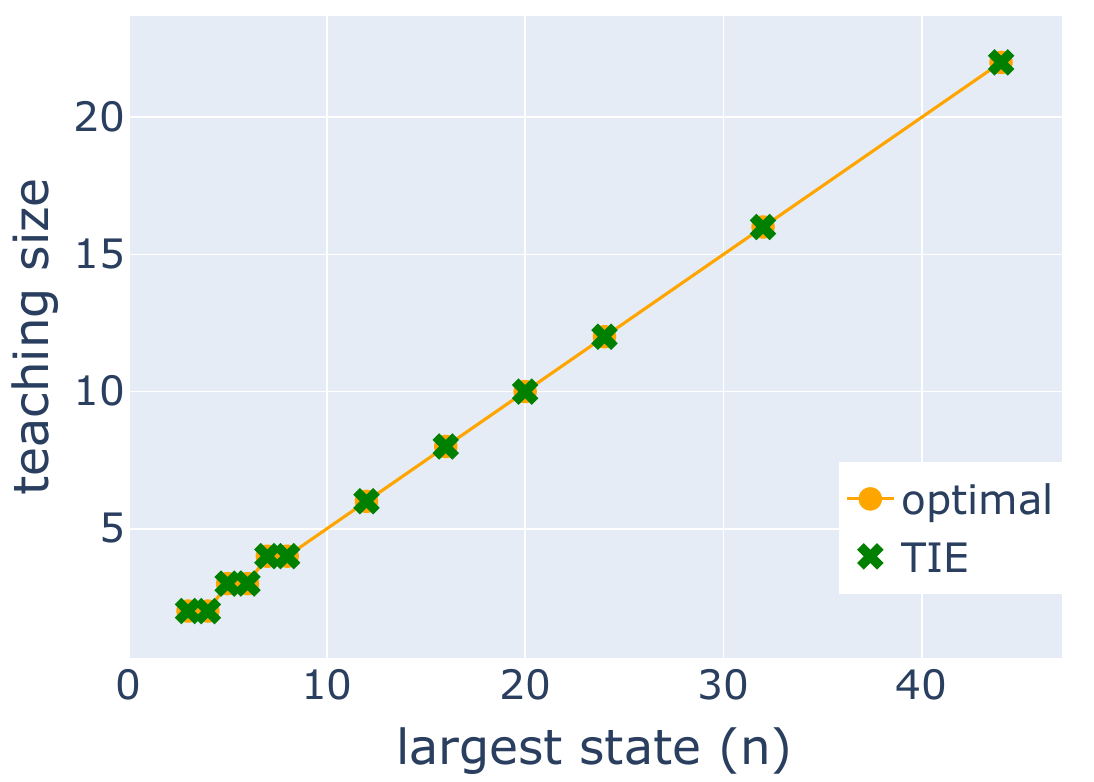}
		\caption{}
	\end{subfigure}
	
	\caption{Polygon Tower.  (a) All feature difference vectors for $n=6$.
 (b) Top-down view of the extreme vectors of the primal cone for $n=6$.
(c) TIE running time on polygon tower with increasing $n$.
(d) The teaching dimension (optimal) vs. the demonstration set size found by TIE.  They overlap.  In fact, TIE finds the exact correct optimal teaching sets on polygon tower.}
	\label{fig:example_polygon_tower}
\end{figure}

\paragraph*{Target policy}  The rule stated in Example~\ref{exp:pick_the_diamond_in_intro}
 is the target policy $\pi^\dagger$.
The target policy is realizable under $\phi$.
For example, $w^\dagger = [1,10]$ uniquely induces $\pi^\dagger$.  
There are a total of $5^6-1$ states and their feature difference vectors $\Psi(D_\cS)$ induced by the target policy are shown as blue dots in Figure~\ref{fig:example_pick_the_diamond_cone}.
The primal cone $\cP(\Psi(D_\cS))$ is the blue-shaded area.
It contains two extreme rays, both need to be covered for successful teaching. 
The version space is green.

\paragraph*{Optimal Teaching} 
Multiple demonstration sets can cover extreme rays. The left extreme ray is covered by any state in which the target action is 1 (i.e. the diamond in slot 1 is the only one having the highest number of edges), and the diamond in slot 6 has one less edge. 
The right extreme ray is covered by any state in which the are two or more diamonds with the highest number of edges and the target action is to choose the rightmost of them.
Figure~\ref{fig:teaching_set_on_pick_the_diamond} gives an example.
Demonstrating two such states induces the correct version space $\VS(D_\cS)$ in the learner. This is a dramatic improvement over demonstrating on all $5^n-1$ states.

We run TIE on this instance which produces a teaching set of size two as illustrated in Figure~\ref{fig:teaching_set_by_tie_on_pick_the_diamond}.
\begin{figure}[H]
	\begin{subfigure}{0.5\textwidth}
		\includegraphics[width=0.95\textwidth]{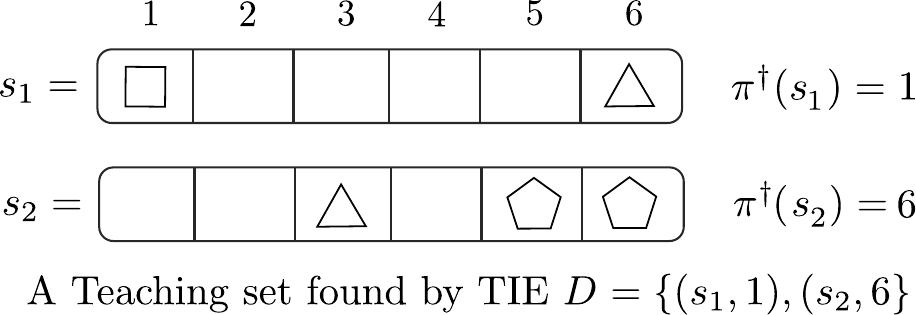}
		\caption{The teaching set produced by TIE algorithm.}
		\label{fig:teaching_set_by_tie_on_pick_the_diamond}
	\end{subfigure}
	\begin{subfigure}{0.45\textwidth}
		\centering
		\includegraphics[width=0.95\textwidth]{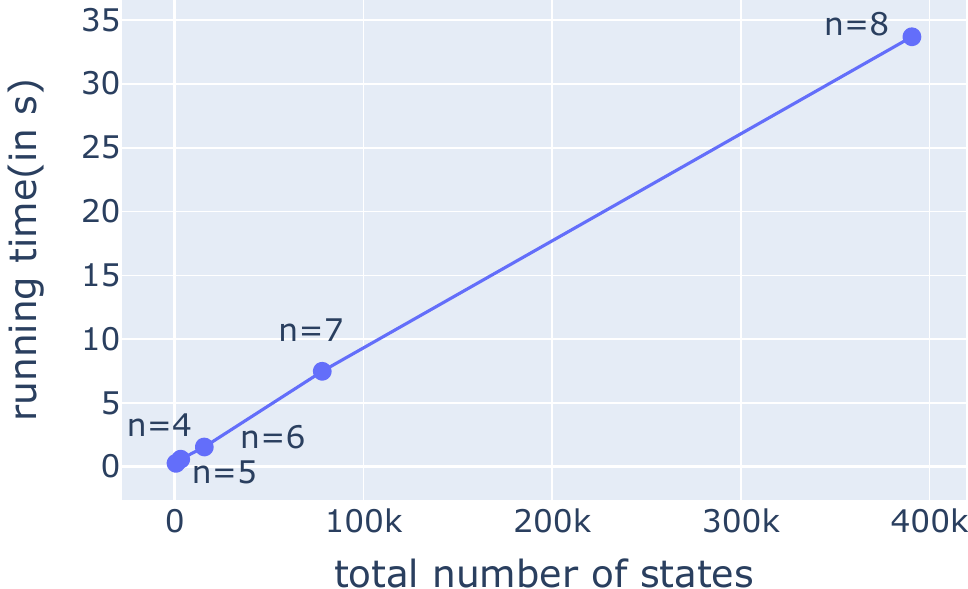}
		\caption{Runtime of TIE on pick the right diamond.}
		\label{fig:pick_the_diamond_running_time}
	\end{subfigure}	
\end{figure}
We note that that this is not only a valid teaching set but is also an optimal one. Furthermore, our algorithm scales almost linearly in the size of the state space $5^n$ as shown by the curve in Figure~\ref{fig:pick_the_diamond_running_time}. This is due to the fact that even though the state space  of this problem is huge, the number of feature difference vectors and corresponding extreme rays in this problem is small. Therefore, the linear program and the set cover problem take negligible time compared to constructing feature difference vectors in each state which is linear in $n$. This validates the effectiveness and efficiency of our algorithm on this family of instances.

\subsection{Polygon Tower}
Let the state space be $\mathcal S = \{2,\ldots,n\}$, the action space be $\mathcal A = [n+1]$,
the feature function be $\phi : \mathcal S \times \mathcal A \rightarrow \mathbb R^3$ given by
\begin{align}
	\phi(s,a) =  
	\begin{cases}
		\left[0,0,s\right] & \text{ if } a = n+1\\
		\left[-s \cdot \cos({\frac{2\pi a}{s}}), -s \cdot \sin({\frac{2\pi a}{s}}), 0\right] & \text{ otherwise}
	\end{cases}
\end{align}
We note that for a fixed state $s$, the feature vectors for actions $1\ldots n$ lie on a polygon of radius $s$ centered around the origin on the xy plane.
\paragraph*{Target Policy} The teacher wants to teach the target policy $\pi^\dagger$ where $\forall s \in \cS, \,\pi^\dagger(s) = n+1$. The policy is realizable: for example, $w = [0,0,1]$ induces this policy. 
The feature difference vectors induced by $\pi^\dagger$ on $\cS$ is given as $\Psi(D_\cS) = \{[s \cdot \cos({\frac{2\pi a}{s}}), s \cdot \sin({\frac{2\pi a}{s}}), s] : s \in \mathcal S, a \neq n+1\}$.
These difference vectors lie on elevated polygons as shown in Figure~\ref{fig:example_polygon_tower}(a).
In particular, state $s$ induces a $s$-gon of radius $s$ centered at $(0,0,s)$. Figure~\ref{fig:example_polygon_tower}(b) shows the top view of the extreme rays of the primal cone $\cP(\Psi(D_\cS))$. The extreme rays are shown as dots and the states that cover each extreme ray are labeled. 

\paragraph*{Optimal Teaching} 
The polygon tower problem has an interesting structure that allows us to characterize the minimum demonstration set.
\begin{proposition}
\label{prop:factor}
	The optimal teaching set $T^*$ of the polygon tower consists of all states in $\cS$ that are not divisible by any other states in $\cS$.
\end{proposition}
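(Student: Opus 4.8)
The plan is to invoke Lemma~\ref{lem:extreme_ray_coverage_for_teaching}, which says a teaching set $T$ is valid exactly when it covers every extreme ray of $\cP(\Psi(D_\cS))$, and then to (i) identify these extreme rays explicitly, (ii) read off which state covers which ray as a divisibility condition, and (iii) solve the resulting finite set cover by a short number-theoretic argument. Throughout, I use that the minimum teaching set is the minimum cover of the extreme rays, so $T^*$ is determined once the rays and the covering relation are pinned down.

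First I would identify the extreme rays. Every induced difference vector has the form $\psi_{s,n+1,a}=[\,s\cos(2\pi a/s),\,s\sin(2\pi a/s),\,s\,]$, so after rescaling to unit third coordinate it becomes $[\cos\theta,\sin\theta,1]$ with $\theta=2\pi a/s$; in particular all generators of $\cP(\Psi(D_\cS))$ lie on the circular cone $\{x^2+y^2=z^2,\ z>0\}$. Since each generator has strictly positive third coordinate, slicing the cone by the plane $\{z=1\}$ identifies $\cP(\Psi(D_\cS))\cap\{z=1\}$ with the convex hull of the planar points $(\cos\theta,\sin\theta)$, all of which lie on the unit circle, and extreme rays of the cone correspond to vertices of this hull. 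Because the circle is strictly convex, no such point is a convex combination of the others, so every distinct direction that occurs is a vertex and hence an extreme ray. As $a$ runs over $\{1,\dots,n\}$ the angle $\theta=2\pi a/s$ runs over $\{2\pi k/s:0\le k<s\}$, so writing each angle in lowest terms as $2\pi m/\ell$ with $\gcd(m,\ell)=1$, the extreme rays are in bijection with the reduced fractions $m/\ell$ for $1\le\ell\le n$.

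Next I would record the covering relation: state $s$ covers the extreme ray of reduced denominator $\ell$ if and only if $\ell\mid s$, since $2\pi k/s=2\pi m/\ell$ in lowest terms forces $\ell\mid s$. Two consequences drive the set cover. Necessity: for a divisibility-maximal state $s$ (one with no proper multiple in $\cS$, i.e.\ $2s>n$), the ray of denominator exactly $s$ is covered only by multiples of $s$ in $\cS$, which is $s$ alone; hence every valid teaching set must contain $s$. Sufficiency: for any denominator $\ell\le n$, the largest multiple $M=\lfloor n/\ell\rfloor\,\ell\le n$ satisfies $M>n/2$, so $M$ is itself maximal and $\ell\mid M$, meaning the maximal states already cover every ray. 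Combining the two shows that the set of divisibility-maximal states is the unique minimum cover, and by Lemma~\ref{lem:extreme_ray_coverage_for_teaching} this is exactly the optimal teaching set $T^*$; these are precisely the states of $\cS$ that do not divide any other state of $\cS$ (equivalently $2s>n$), as claimed.

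The main obstacle is the extreme-ray identification in the second step: one must argue rigorously that the slicing plane meets every generating ray (which holds because each generator has strictly positive third coordinate) and that strict convexity of the circle promotes every occurring direction to a genuine extreme ray, so that no direction is accidentally buried in the interior or on a face of the cone. Once the rays are pinned down as the reduced fractions of order $n$ and ``lying on the same ray'' is translated into the clean condition $\ell\mid s$, the necessity and sufficiency bookkeeping is routine number theory.
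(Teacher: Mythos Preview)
Your argument is correct and follows the same strategy as the paper: both reduce to covering extreme rays via Lemma~\ref{lem:extreme_ray_coverage_for_teaching}, observe that state $s'$ covers every ray of state $s$ whenever $s\mid s'$, and show each divisibility-maximal state owns a private ray (the paper singles out $[\,s\cos(2\pi/s),\,s\sin(2\pi/s),\,s\,]$, your ``denominator exactly $s$'' ray). Your write-up is considerably more complete than the paper's two-sentence sketch---in particular, your slicing argument that every occurring direction on the circle is a genuine extreme ray, and your verification that the largest multiple $M=\lfloor n/\ell\rfloor\,\ell$ is always maximal, fill gaps the paper leaves implicit.
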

\begin{proof}
	For any pair of states $s, s'$ such that $s' > s$ and $s' \text{ mod } s =0$, then $s'$ fully covers the induced difference vectors of the characteristic of $s$ so teaching state $s$ is not required if $s'$ is taught. For example, state 6 in Figure~\ref{fig:example_polygon_tower}(b) covers all the extreme rays induced by states 2, and 3. Conversely, if a state $s$ is not a factor of any other states in $\cS$ then it must be taught because $s$ induces the extreme ray $[s \cdot \cos({\frac{2\pi}{s}}), s \cdot \sin({\frac{2\pi}{s}}), s] $ that can only be covered by $s$. 
\end{proof}

We run TIE with greedy set cover on a family of polygon towers with $n \in  \{3,4,5,6,7,8,12,16,20,24,32,44\}$. 
We verify TIE's solution to the ground truth minimum demonstration set established in Proposition~\ref{prop:factor}.

We observe that TIE always recovers the correct minimum demonstration set. This can be observed from the overlap curve of the optimal size of the teaching set (shown in orange) and the size of the teaching set found by TIE (shown in green) in Figure~\ref{fig:example_polygon_tower}(d). 

We also observe that TIE runs quickly.  We plot the running time of TIE over instance size $n$
in a log-log plot in Figure~\ref{fig:example_polygon_tower}(c). For each $n$, we average the running time over $3$ independent trial runs. The straight line of this log-log plot shows that our algorithm indeed runs in polynomial time. The empirical estimate of the slope of the linear curve (after omitting the first three outlier points for small $n$) turns out to be 4.67 implying a running time of order $O(n^5)$ on this family of instances. Our algorithm has a worst-case running time of order $O({(|\cS||\cA|)}^3)$ and for $|\cS|=|\cA|=n$ as in this example, it is $O(n^6)$.

\section{Related Work}

On of the earliest work in Machine Teaching was done by \cite{goldman1992complexity} who studied Machine Teaching of a Behavior Cloning agent with a finite hypotheses space. We extend this work to a more challenging setting teaching a learner with an infinite uncountable hypotheses family. Several follow up works since then have studied machine teaching in different machine learning paradigms. For example \cite{liu2016teaching, zhu2015machine, zhu2013machine,NEURIPS2021_b5488aef,qian2022teaching} studied teaching algorithms for supervised learning agents that work under noisy demonstrations, \cite{cakmak2011active,wang2021teaching} studied optimal teaching of active learning agents and more recent works like \cite{brown2019machine,rakhsha2020policy,cakmak2012algorithmic,mansouri2019preference,rakhsha2021policy, zhang2021sample} have studied machine teaching more complex online sequential decision making learners like Q-Learning. Our work differs from these prior works in that we study machine teaching of a simple linear Behavior Cloning learner that has not been studied in the past.

Some recent works have also studied machine teaching of various kinds of linear learners. For example, \cite{ohannessianoptimal} studied optimal teaching of linear perceptron algorithm,  \cite{liu2016teaching,Ma2018TeacherIL} studied teaching linear empirical risk minimizing algorithms like Ridge Regression etc. These works address various types of linear learners, all of which are clearly very different the Linear Behavior Cloning learner that we study in this paper. Another line of works ~\cite{cakmak2012algorithmic, walsh2012dynamic, DBLP:conf/nips/TschiatschekGHD19,DBLP:conf/ijcai/KamalarubanDCS19,brown2019machine,haug2018teaching}  have studied Teaching by Demonstration in Reinforcement Learning setting. However, unlike our LBC learner who learns a policy directly from the demonstration, they studied learners like Inverse Reinforcement Learning which first learns a reward function $R$ from the demonstrations and then finds an optimal policy with respect $R$. These learners are clearly different from ours.

\section{Conclusion}
In this work, we studied the problem of optimal teaching of a Linear Behavior Cloning (LBC) learner. We characterized the necessary and sufficient condition for teaching in terms of covering extreme rays and provided an optimal algorithm called TIE to find the optimal teaching set. We further proved that optimal teaching LBC is a NP-hard problem and provided an efficient but approximate algorithm that finds an approximately optimal teaching set with an approximation ratio of $\log(|\cA|-1)$. Finally, we demonstrated the effectiveness of our algorithm on two different instances.

\bibliography{main_arxiv}

\begin{thebibliography}{10}

\bibitem{brown2019machine}
Daniel~S Brown and Scott Niekum.
\newblock Machine teaching for inverse reinforcement learning: Algorithms and
  applications.
\newblock In {\em Proceedings of the AAAI Conference on Artificial
  Intelligence}, volume~33, pages 7749--7758, 2019.

\bibitem{cakmak2012algorithmic}
Maya Cakmak and Manuel Lopes.
\newblock Algorithmic and human teaching of sequential decision tasks.
\newblock In {\em Twenty-Sixth AAAI Conference on Artificial Intelligence},
  2012.

\bibitem{cakmak2011active}
Maya Cakmak and Andrea~L Thomaz.
\newblock Active learning with mixed query types in learning from
  demonstration.
\newblock In {\em Proc. of the ICML workshop on new developments in imitation
  learning}. Citeseer, 2011.

\bibitem{Codevilla_2019_ICCV}
Felipe Codevilla, Eder Santana, Antonio~M. Lopez, and Adrien Gaidon.
\newblock Exploring the limitations of behavior cloning for autonomous driving.
\newblock In {\em Proceedings of the IEEE/CVF International Conference on
  Computer Vision (ICCV)}, October 2019.

\bibitem{goldman1992complexity}
Sally~A Goldman and Michael~J Kearns.
\newblock On the complexity of teaching.
\newblock {\em Journal of Computer and System Sciences}, 50(1):20--31, 1995.

\bibitem{goldman1993teaching}
Sally~A Goldman and H~David Mathias.
\newblock Teaching a smart learner.
\newblock In {\em Proceedings of the sixth annual conference on computational
  learning theory}, pages 67--76, 1993.

\bibitem{haug2018teaching}
Luis Haug, Sebastian Tschiatschek, and Adish Singla.
\newblock Teaching inverse reinforcement learners via features and
  demonstrations.
\newblock In {\em Advances in Neural Information Processing Systems}, pages
  8464--8473, 2018.

\bibitem{DBLP:conf/ijcai/KamalarubanDCS19}
Parameswaran Kamalaruban, Rati Devidze, Volkan Cevher, and Adish Singla.
\newblock Interactive teaching algorithms for inverse reinforcement learning.
\newblock In {\em IJCAI}, pages 2692--2700, 2019.

\bibitem{liu2016teaching}
Ji~Liu, Xiaojin Zhu, and Hrag Ohannessian.
\newblock The teaching dimension of linear learners.
\newblock In {\em International Conference on Machine Learning}, pages
  117--126. PMLR, 2016.

\bibitem{NEURIPS2021_b5488aef}
Weiyang Liu, Zhen Liu, Hanchen Wang, Liam Paull, Bernhard Sch\"{o}lkopf, and
  Adrian Weller.
\newblock Iterative teaching by label synthesis.
\newblock In M.~Ranzato, A.~Beygelzimer, Y.~Dauphin, P.S. Liang, and J.~Wortman
  Vaughan, editors, {\em Advances in Neural Information Processing Systems},
  volume~34, pages 21681--21695. Curran Associates, Inc., 2021.

\bibitem{Ma2018TeacherIL}
Yuzhe Ma, Robert~D. Nowak, Philippe Rigollet, Xuezhou Zhang, and Xiaojin Zhu.
\newblock Teacher improves learning by selecting a training subset.
\newblock In {\em International Conference on Artificial Intelligence and
  Statistics}, 2018.

\bibitem{mansouri2019preference}
Farnam Mansouri, Yuxin Chen, Ara Vartanian, Jerry Zhu, and Adish Singla.
\newblock Preference-based batch and sequential teaching: Towards a unified
  view of models.
\newblock {\em Advances in neural information processing systems}, 32, 2019.

\bibitem{ohannessianoptimal}
Xuezhou Zhang Hrag~Gorune Ohannessian, Ayon Sen, Scott Alfeld, and Xiaojin Zhu.
\newblock Optimal teaching for online perceptrons.
\newblock {\em University of Wisconsin-Madison}, 2016.

\bibitem{qian2022teaching}
Hong Qian, Xu-Hui Liu, Chen-Xi Su, Aimin Zhou, and Yang Yu.
\newblock The teaching dimension of regularized kernel learners.
\newblock In {\em International Conference on Machine Learning}, pages
  17984--18002. PMLR, 2022.

\bibitem{rakhsha2020policy}
Amin Rakhsha, Goran Radanovic, Rati Devidze, Xiaojin Zhu, and Adish Singla.
\newblock Policy teaching via environment poisoning: Training-time adversarial
  attacks against reinforcement learning.
\newblock In {\em International Conference on Machine Learning}, pages
  7974--7984. PMLR, 2020.

\bibitem{rakhsha2021policy}
Amin Rakhsha, Goran Radanovic, Rati Devidze, Xiaojin Zhu, and Adish Singla.
\newblock Policy teaching in reinforcement learning via environment poisoning
  attacks.
\newblock {\em The Journal of Machine Learning Research}, 22(1):9567--9611,
  2021.

\bibitem{DBLP:journals/corr/abs-1905-13566}
Faraz Torabi, Garrett Warnell, and Peter Stone.
\newblock Recent advances in imitation learning from observation.
\newblock {\em CoRR}, abs/1905.13566, 2019.

\bibitem{DBLP:conf/nips/TschiatschekGHD19}
Sebastian Tschiatschek, Ahana Ghosh, Luis Haug, Rati Devidze, and Adish Singla.
\newblock Learner-aware teaching: Inverse reinforcement learning with
  preferences and constraints.
\newblock In {\em Advances in Neural Information Processing Systems}, 2019.

\bibitem{walsh2012dynamic}
Thomas~J Walsh and Sergiu Goschin.
\newblock Dynamic teaching in sequential decision making environments.
\newblock {\em arXiv preprint arXiv:1210.4918}, 2012.

\bibitem{setCover2005}
Valika~K. Wan and Khanh~Do Ba.
\newblock Approximating set cover, 2005.
\newblock Accessed: 2023-10-15.

\bibitem{wang2021teaching}
Chaoqi Wang, Adish Singla, and Yuxin Chen.
\newblock Teaching an active learner with contrastive examples.
\newblock {\em Advances in Neural Information Processing Systems},
  34:17968--17980, 2021.

\bibitem{zhang2021sample}
Xuezhou Zhang, Shubham Bharti, Yuzhe Ma, Adish Singla, and Xiaojin Zhu.
\newblock The sample complexity of teaching by reinforcement on q-learning.
\newblock In {\em Proceedings of the AAAI Conference on Artificial
  Intelligence}, volume~35, pages 10939--10947, 2021.

\bibitem{zhu2013machine}
Jerry Zhu.
\newblock Machine teaching for bayesian learners in the exponential family.
\newblock {\em Advances in Neural Information Processing Systems}, 26, 2013.

\bibitem{zhu2015machine}
Xiaojin Zhu.
\newblock Machine teaching: An inverse problem to machine learning and an
  approach toward optimal education.
\newblock In {\em Proceedings of the AAAI Conference on Artificial
  Intelligence}, volume~29, 2015.

\end{thebibliography}
\bibliographystyle{plain}

\section{Appendix}

Given a finite set of vectors $\cX = \{x_i  \in \bR^d: i \in [n]\}$,  we define the primal cone generated by this set as 
\begin{equation} \label{eq:conedef}
  \cP(\cX) = \left\{  \sum_{i \in \cX} \lambda_i x_i, \;\; \lambda_i \ge 0, \; \mbox{ $\forall i \in \cX$} \right\}.
\end{equation}
Given any set $U$, we define the dual cone as
\begin{equation} \label{eq:ClU}
\cD(U) := \{ y \, | \, y^Tx >0,\; \mbox{$\forall x \in U, x \ne 0$} \}.
\end{equation}
In particular, if the finite set $\cX$ has $x_i \ne 0$ for all $i \in [n]$, we have
\begin{equation} \label{eq:C1}
\cD(\cX) := \{ y \, | \, y^Tx_i >0, \, i=1,2,\dotsc,n \}.
\end{equation}

We prove some basic properties about cones in $\bR^d$. 
\begin{proposition}For any finite sets $U,V \text{ s.t. }U \subseteq V \subset \bR^d$, we have that,
	\begin{enumerate}
		\item  $\cD(U) = \cD(\cP(U))$.\\ \label{eqn:prop1}
		\item $\cD(U) \supseteq \cD(V)$.\\ \label{eqn:prop2}
		\item $\cP(U) = \cP(V) \implies \cD(U) = \cD(V)$.\\ \label{eqn:prop3}
	\end{enumerate}
\end{proposition}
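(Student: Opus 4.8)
The plan is to prove the three claims in order, treating part~\ref{eqn:prop1} as the main lemma from which part~\ref{eqn:prop3} follows immediately. Part~\ref{eqn:prop2} is the easiest: it is just monotonicity of the $\cD(\cdot)$ operator under set inclusion. Indeed, if $y \in \cD(V)$ then $\langle y, x\rangle > 0$ holds for every nonzero $x \in V$, and since $U \subseteq V$ this in particular holds for every nonzero $x \in U$, giving $y \in \cD(U)$; hence $\cD(U) \supseteq \cD(V)$. No geometry beyond the definitions is needed here.

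For part~\ref{eqn:prop1} I would prove the two inclusions separately. The inclusion $\cD(\cP(U)) \subseteq \cD(U)$ is immediate because each generator $x \in U$ lies in $\cP(U)$ (take its own coefficient equal to $1$ and all others $0$), so a weight that is strictly positive against every nonzero element of $\cP(U)$ is strictly positive against every nonzero $x \in U$. For the reverse inclusion $\cD(U) \subseteq \cD(\cP(U))$, I would fix $y \in \cD(U)$ and an arbitrary nonzero $z = \sum_{i} \lambda_i x_i \in \cP(U)$ with all $\lambda_i \geq 0$, and expand $\langle y, z\rangle = \sum_i \lambda_i \langle y, x_i\rangle$, aiming to show this quantity is strictly positive.

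The only subtle point — and the main obstacle — is preserving the \emph{strict} inequality through the conic combination. Each summand $\lambda_i \langle y, x_i\rangle$ is nonnegative: if $x_i \neq 0$ then $\langle y, x_i\rangle > 0$ since $y \in \cD(U)$, and if $x_i = 0$ the summand vanishes; so the expansion is a sum of nonnegative terms. To rule out the sum being zero, I would invoke $z \neq 0$: a nonnegative combination cannot be nonzero unless some nonzero generator carries a strictly positive coefficient, and that term contributes $\lambda_i \langle y, x_i\rangle > 0$ while every other term is $\geq 0$, forcing $\langle y, z\rangle > 0$. This completes part~\ref{eqn:prop1}. Part~\ref{eqn:prop3} then follows by chaining $\cD(U) = \cD(\cP(U)) = \cD(\cP(V)) = \cD(V)$, where the outer equalities are part~\ref{eqn:prop1} applied to $U$ and to $V$, and the middle equality is the hypothesis $\cP(U) = \cP(V)$.
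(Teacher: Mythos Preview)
Your proof is correct and follows essentially the same approach as the paper: both establish part~\ref{eqn:prop1} by checking the two inclusions via expanding the conic combination, derive part~\ref{eqn:prop2} from monotonicity under inclusion, and obtain part~\ref{eqn:prop3} by chaining part~\ref{eqn:prop1} with the hypothesis $\cP(U)=\cP(V)$. If anything, you are more careful than the paper about justifying the \emph{strict} positivity of $\langle y, z\rangle$ (isolating a nonzero generator with positive coefficient), which the paper's argument elides.
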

\begin{proof}
	\ref{eqn:prop1} For any $w \in \cD(U), \langle w, u_i\rangle > 0,\,\forall u_i \in U \implies \forall i, \lambda_i \geq 0, \sum_i \lambda_i u_i \neq 0,\,  \langle w, \sum_i \lambda_i u_i\rangle > 0 \implies w \in \cD(\cP(U))$. For the opposite direction, let  $\forall \lambda_i \geq 0, \sum_i \lambda_i u_i \neq 0,\, \langle w, \sum_i \lambda_i u_i\rangle > 0$. For a fixed $i$, choose $\lambda_i=1$ and $\lambda_j =0,\,\forall j \neq i$. Then, we have $\langle w, u_i\rangle > 0,\forall u_i \in U$, thus, $w \in \cD(U)$.
	
	\ref{eqn:prop2} Now, for second part of the proposition, let $x \in \cD(V)$ i.e. $\langle x, v \rangle > 0,\,\forall v \in V$.  Since, $U \subseteq V$, this implies $\langle w, u_i\rangle > 0,\forall u_i \in U$. Thus, $x \in \cD(U)$, thus proving the statement. 
	
	\ref{eqn:prop3} Finally, for the the third part, we have that $\cD(U)=\cD(\cP(U))=\cD(\cP(V))=\cD(V)$, where the first and third equality follows from part 1 of this proposition and second equality follows from the premise.	
\end{proof}

For proof of Lemma \ref{lem:extreme_ray_coverage_for_inducing_target_cone} in the main text, apply Lemma~\ref{lem:lemma_1_proof} with $V = \Psi(D_\cS)$.

\newcommand{\R}{\mathbb{R}}

\newtheorem{thm}{Theorem}

\newtheorem{prop}{Proposition}
\newtheorem{cor}{Corollary}
\newtheorem{ex}{Example}
\newtheorem{exse}{Exercise}
\newtheorem{defn}{Definition}

\subsection{Finding extreme rays of primal cone}

In the remainder, we assume that the finite set $\cX = \{x_i  \in \bR^d: i \in [n]\}$ contains all nonzero vectors, and recall the definitions of cone \eqref{eq:conedef} and dual cone \eqref{eq:C1}.
Our problem is to find a set $\cX^* \subset \cX$ of minimum cardinality such that $\cD(\cX^*) = \cD(\cX)$.

Note that by realizability, we have that $\cD(\cX)$ is nonempty. We can define $\cD(\cX)$ alternatively as follows
\begin{equation} \label{eq:Cdef}
	\begin{aligned}
		\cD(\cX) & := \{  \alpha z \, | \, \alpha>0, \,  z\in P(\cX) \} \\
		\mbox{where} \;\; P(\cX) & := \{ z \, | \, z^Tx_i \ge 1, \, i \in \cX \}.
	\end{aligned}
\end{equation}
\begin{proof}
	Any $z$ satisfying \eqref{eq:Cdef} clearly has $z^T x_i >0$
	for all $i \in \cX$, so $z \in \cD(\cX)$. Conversely, given any $y$ with $y^Tx_i>0$ for
	all $i \in \cX$, we set $\alpha = \min_{i \in \cX} \,
	y^Tx_i>0$ and $z = y/\alpha$ to get $\alpha$ and $z$ satisfying
	\eqref{eq:Cdef}.
\end{proof}

The key element of the algorithm is an LP of the following form, for
some $x_j \in \cX$:
\begin{equation} \label{eq:jv1}
	\begin{aligned}
		 \mbox{LP}(x_j,\cX/\{x_j\}): \qquad \qquad  \min_{w} \, w^T x_j \;\\ 
		 \mbox{subject to} \; w^Tx_i \ge 1 \; \mbox{$\forall i \in \cX/\{x_j\}$.}
	\end{aligned}
\end{equation}
Note that this problem can be written alternatively, using the notation of \eqref{eq:Cdef}, as
\begin{equation} \label{eq:jv1a}
 \min_{w} \, w^T x_j \; \mbox{subject to }  w \in P(\cX/\{x_j\}).
\end{equation}
The dual of \eqref{eq:jv1} will also be useful in motivating and
understanding the approach:
\begin{equation} \label{eq:jv.dual}
	\begin{aligned}
		\mbox{LP-Dual}(x_j,\cX/\{x_j\}): \quad
		\max_{\lambda_i, i \in \cX/\{x_j\}} \, \sum_{i \in \cX/\{x_j\}} \lambda_i \;\;\\
		\mbox{s.t.} \;\;  \sum_{i \in \cX/\{x_j\}} \lambda_i x_i = x_j, \;\; \lambda_i \ge 0 \; \mbox{for all $i \in \cX/\{x_j\}$.}
	\end{aligned}
  \end{equation}

We prove a lemma with several observations. 
\begin{lemma}[Proof of Lemma 3] \label{lem:dual}
 Suppose that $\cX$ is not empty. Let $x_j \in \cX$. We have the following.
  \begin{enumerate}
    \item[(i)] When \eqref{eq:jv1} is unbounded, \eqref{eq:jv.dual} is
      infeasible, so $x_j \notin \cP (\cX/\{x_j\})$. Furthermore, $\exists w \in \bR^d$ s.t. $w \in \cD(\cX/\{x_j\})$ but $w \notin \cD(\cX)$.
    \item[(ii)] if \eqref{eq:jv1} has a solution, the optimal objective value must be positive.
    \item[(iii)] When \eqref{eq:jv1} has a solution with a positive optimal objective, then $x_j \in \cP (\cX/\{x_j\})$.
  \end{enumerate}
\end{lemma}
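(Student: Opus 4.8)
The plan is to derive all three parts from the LP duality theorem applied to the primal--dual pair \eqref{eq:jv1}--\eqref{eq:jv.dual}, leaning on two facts that hold throughout: first, the primal LP \eqref{eq:jv1} is always feasible, and second, $x_j \neq 0$. For feasibility I would invoke realizability: since $\cD(\cX)$ is nonempty, there is a $w$ with $w^T x_i > 0$ for all $i \in \cX$; rescaling $w$ by $1/\min_{i} w^T x_i$ yields a point satisfying $w^T x_i \ge 1$ for every $i$, in particular for all $i \in \cX/\{x_j\}$, so \eqref{eq:jv1} is feasible. Because it is feasible, the LP is either unbounded below or attains a finite optimum, and these two cases correspond exactly to the hypotheses of (i) and of (ii)--(iii), making the dichotomy exhaustive.

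For part (i), if \eqref{eq:jv1} is unbounded below then by LP duality its dual \eqref{eq:jv.dual} must be infeasible. Infeasibility of \eqref{eq:jv.dual} means there is no choice of $\lambda_i \ge 0$ with $\sum_{i \neq j} \lambda_i x_i = x_j$, which is precisely the statement $x_j \notin \cP(\cX/\{x_j\})$ by the cone definition \eqref{eq:conedef}. For the ``furthermore'' claim, unboundedness produces a feasible $w$ with $w^T x_j < 0$; this $w$ satisfies $w^T x_i \ge 1 > 0$ for all $i \neq j$, so $w \in \cD(\cX/\{x_j\})$, while $w^T x_j < 0$ forces $w \notin \cD(\cX)$. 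Hence removing $x_j$ strictly enlarges the dual cone.

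For parts (ii) and (iii) I would assume \eqref{eq:jv1} attains a finite optimum. Then strong duality applies (the primal is feasible and bounded), so \eqref{eq:jv.dual} is feasible and attains the same optimal value. Part (iii) is then immediate: dual feasibility is exactly the existence of $\lambda_i \ge 0$ with $\sum_{i \neq j} \lambda_i x_i = x_j$, i.e. $x_j \in \cP(\cX/\{x_j\})$. For part (ii), letting $\lambda^*$ be a dual optimizer, the common optimal value equals $\sum_{i \neq j} \lambda_i^*$; since $x_j = \sum_{i \neq j} \lambda_i^* x_i$ and $x_j \neq 0$, the multipliers cannot all vanish, so $\sum_{i \neq j} \lambda_i^* > 0$ and the optimal objective is strictly positive.

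I expect the step most in need of care to be the bookkeeping around which duality regime applies and tying dual (in)feasibility cleanly to the cone-membership statements --- in particular establishing feasibility of \eqref{eq:jv1} from realizability so the unbounded-versus-finite dichotomy is complete, and extracting the explicit separating $w$ in the ``furthermore'' part of (i). The remaining arguments are routine consequences of strong duality once $x_j \neq 0$ is used to rule out a zero optimal value.
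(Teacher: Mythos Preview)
Your proposal is correct and follows essentially the same LP-duality route as the paper: both derive (i) from weak duality (unbounded primal $\Rightarrow$ infeasible dual) together with a feasible $w$ achieving $w^T x_j<0$, and derive (ii)--(iii) from strong duality plus the observation that the dual objective $\sum_i \lambda_i$ is nonnegative and cannot vanish because $x_j \neq 0$. Your treatment is in fact slightly more careful than the paper's, since you explicitly justify primal feasibility via realizability (nonemptiness of $\cD(\cX)$) and extract a concrete separating $w$ in (i) rather than loosely referring to an ``optimal solution'' of an unbounded LP.
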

\begin{proof}
  \begin{itemize}
  \item[(i)] From LP duality, when \eqref{eq:jv1} is unbounded, then
  \eqref{eq:jv.dual} is infeasible, giving the first part of the result. 
  For the second part, we note by the feasibility condition of \ref{eq:jv1} that the optimal solution $w^* \in \cD(\cX/\{x_j\})$ but since ${w^*}^Tx_j  <0$, that is, $w^* \notin \cD(\cX)$, giving us the result.

  \item[(ii)] If \eqref{eq:jv1} were to have a solution with optimal
  objective $0$, then by LP duality, the optimal objective of
  \eqref{eq:jv.dual} would also be zero, so the only possible value
  for $\lambda$ is $\lambda_i=0$ for all $i \in \cX/\{x_j\}$. The constraint of \eqref{eq:jv.dual} then implies that $x_j=0$, which cannot be the case, since we assume that all vectors in $\cX$ are nonzero.
  
  \eqref{eq:jv1} cannot have a solution with {\em negative} optimal objective value, because by LP duality, \eqref{eq:jv.dual} would
  also have a solution with negative objective value. However, the value of the objective for \eqref{eq:jv.dual} is non-negative at all feasible points, so this cannot happen.

  \item[(iii)] When \eqref{eq:jv1} has a solution with positive optimal
  objective, then LP duality implies that \eqref{eq:jv.dual} has a
  solution with the same objective. Thus, there are nonnegative $\lambda_i$, $i \in \cX/\{x_j\}$, not all $0$, such that the constraint in \eqref{eq:jv.dual} is satisfied, giving the result. \label{eqn:duality_result}
  \end{itemize}
\end{proof}

\begin{lemma}[Proof of Lemma 1]
	Let $U$ and $V$ be finite sets with $U \subseteq V \subseteq \bR^d$ and $\cD(V)$ is non-empty. Then $\cP(U) = \cP(V)$ and $\cD(U) = \cD(V)$ if and only if $U$ contains at least one vector on each of the extreme rays of $\cP(V)$. 
	\label{lem:lemma_1_proof}
\end{lemma}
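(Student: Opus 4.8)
The plan is to prove the biconditional by reducing it to a statement purely about the primal cone. Since $U \subseteq V$, the inclusion $\cP(U) \subseteq \cP(V)$ is immediate from~\eqref{eq:conedef}, so in either direction the only content is the reverse inclusion $\cP(V) \subseteq \cP(U)$. Moreover, by property~\ref{eqn:prop3} of the appendix Proposition, $\cP(U) = \cP(V)$ already implies $\cD(U) = \cD(V)$; hence it suffices to show that $\cP(U) = \cP(V)$ is equivalent to $U$ covering every extreme ray of $\cP(V)$, and the dual-cone equality follows for free. Before either direction I would record that $\cD(V) \neq \emptyset$ makes $\cP(V)$ pointed: fixing $w \in \cD(V)$ gives $\langle w, x\rangle > 0$ for every nonzero $x \in V$, hence $\langle w, z\rangle > 0$ for every nonzero $z \in \cP(V)$, so $\cP(V)$ contains no line. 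Pointedness is exactly what guarantees that $\cP(V)$ has finitely many extreme rays and that they generate it.

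For the ``if'' direction, assume $U$ contains a vector on each extreme ray of $\cP(V)$. Because $\cP(V)$ is a pointed, finitely generated cone, the Minkowski--Weyl representation theorem gives $\cP(V) = \cP(\{r_1, \dots, r_k\})$, where $r_1, \dots, r_k$ are representative vectors, one on each extreme ray. (Each extreme ray necessarily passes through some element of $V$, since writing a point of the ray as a nonnegative combination of $V$ and invoking extremality shows an active generator lies on it, so this covering condition is well posed with $U \subseteq V$.) Each $r_j$ is a positive multiple of some $u \in U$ on that same ray, and $\cP(U)$ is closed under positive scaling and addition, so $r_j \in \cP(U)$ for all $j$; therefore $\cP(V) = \cP(\{r_1,\dots,r_k\}) \subseteq \cP(U)$. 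Combined with $\cP(U) \subseteq \cP(V)$ this yields $\cP(U) = \cP(V)$, and then $\cD(U) = \cD(V)$ by property~\ref{eqn:prop3}.

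For the ``only if'' direction, assume $\cP(U) = \cP(V)$ and let $\cR$ be any extreme ray of $\cP(V)$ with a fixed nonzero $r \in \cR$. Since $r \in \cP(V) = \cP(U)$, write $r = \sum_i \lambda_i u_i$ with $u_i \in U$ and $\lambda_i \geq 0$; as $r \neq 0$, some $\lambda_{i_0} > 0$. Splitting $r = \lambda_{i_0} u_{i_0} + \sum_{i \neq i_0} \lambda_i u_i$ exhibits $r$ as a sum of two elements of the cone, so by the defining additivity of an extreme ray the nonzero term $\lambda_{i_0} u_{i_0}$ must lie on $\cR$, whence $u_{i_0} \in \cR$. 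Thus $U$ covers $\cR$, and since $\cR$ was arbitrary, $U$ covers every extreme ray of $\cP(V)$.

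I expect the main obstacle to be the ``if'' direction's appeal to the fact that a pointed, finitely generated cone is the conical hull of representatives of its extreme rays. I would either cite the standard Minkowski--Weyl / cone decomposition theorem, or, to stay self-contained, prove it by slicing $\cP(V)$ with an affine hyperplane transverse to the cone (available precisely because some $w \in \cD(V)$ is strictly positive on $\cP(V) \setminus \{0\}$): the slice is a bounded polytope whose vertices correspond to the extreme rays, and a Carath\'eodory-type decomposition of an arbitrary slice point into vertices lifts to the desired conic decomposition. A minor point of care is the paper's additive definition of an extreme ray, under which a zero summand should be read as lying on the closed ray $\cR \cup \{0\}$; this does not affect the arguments above, since in the only-if direction the peeled term $\lambda_{i_0} u_{i_0}$ is explicitly nonzero.
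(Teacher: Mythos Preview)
Your proof is correct. The sufficiency (``if'') direction matches the paper's argument essentially verbatim: both invoke the fact that a pointed finitely generated cone is the conical hull of its extreme rays, then use property~\ref{eqn:prop3} to pass to the dual. Your explicit observation that $\cD(V)\neq\emptyset$ forces pointedness is a little more careful than the paper, which simply asserts the generation-by-extreme-rays fact.

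The necessity (``only if'') direction, however, is handled by a genuinely different route. The paper argues by contrapositive and leans on the LP machinery of Lemma~\ref{lem:dual}: if $U$ misses some extreme ray $\cR$ with representative $r$, then LP-Dual$(r,U)$ is infeasible, so LP$(r,U)$ is unbounded (feasibility being guaranteed by $\cD(V)\neq\emptyset$), and the unbounded direction yields a $w\in\cD(U)\setminus\cD(V)$, whence $\cD(U)\neq\cD(V)$. Your argument is more elementary and entirely self-contained: you take $r\in\cR\subseteq\cP(V)=\cP(U)$, decompose it as a nonnegative combination over $U$, and apply the additivity definition of an extreme ray directly to force some $u_{i_0}\in U$ onto $\cR$. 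Your route avoids LP duality altogether and is shorter; the paper's route has the compensating advantage of reusing the very LP test that drives Algorithm~\ref{alg:TIE}, so it tightens the link between this structural lemma and the computational procedure.
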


\begin{proof}
	For the sufficiency direction, we note that for a set $U \subseteq V$, if $U$ contains at least one vector on each of the extreme rays of $\cP(V)$ then $\cP(U) = \cP(V)$ (since all the vectors in a cone can be expressed as a conic combination of extreme vectors of the cone). Furthermore, by Proposition \ref{eqn:prop3}, we have $\cD(U) = \cD(V)$. 
	
	For necessity direction, suppose that $U$ does not contain any vector on a certain extreme ray $\cR$ of $V$. 
 Then $U \subseteq V/\cR$. Thus $\cP(U) \subseteq \cP(V/\cR) \subsetneq \cP(V)$ and thus $\cD(U) \supseteq \cD(V/\cR)$. 
 Let $r \in \cR \subset V$. Then LP-Dual$(r,U)$ will be infeasible, so LP$(r,U)$ is either infeasible or unbounded. But LP$(r,U)$ is feasible because pointedness of $\cP(V)$ means that $\cD(V)$ is nonempty, so $\cD(U) \supset \cD(V)$ is also nonempty and so the constraints of LP$(r,U)$ are guaranteed to hold for some $w$. We conclude that LP$(r,U)$ is unbounded, so there is a direction $w$ such that $w^Tx > 0$ for all $x \in U$ but $w^T r<0$. This vector $w$ belongs to $\cD(U)$ but not to $\cD(V)$, so $\cD(U) \subsetneq \cD(V)$, as required.
\end{proof}

\begin{theorem}[Proof of Theorem 4]
	For any LBC teaching problem instance $(\cS,\cA,\phi,\pi^\dagger)$, under realizability Assumption~\ref{assum:realizability},
	Algorithm~\ref{alg:TIE} correctly finds the minimum demonstration set $D^*$,
	and $|D^*|$ is the teaching dimension.
	\label{thm:optimal_teaching_complete}
\end{theorem}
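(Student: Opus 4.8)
The plan is to prove the two claims of the theorem separately: first that the \textbf{MinimalExtreme} subroutine returns a set $\Psi^*$ containing exactly one representative vector on each extreme ray of $\cP(\Psi(D_\cS))$, and second that the subsequent set cover step returns a minimum-cardinality valid teaching set, whose size is by definition the teaching dimension. The first claim leans on the LP test of Lemma~\ref{lem:extreme_rays}, and the second on the necessary-and-sufficient coverage condition of Lemma~\ref{lem:extreme_ray_coverage_for_teaching}.

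For the first claim, I would analyze the single pass of \textbf{MinimalExtreme} through an invariant argument, the key invariant being that $\cP(\cX) = \cP(\Psi(D_\cS))$ is preserved at every step of the loop. Whenever the procedure eliminates $x_j$, Lemma~\ref{lem:extreme_rays} certifies via the positive LP objective that $x_j \in \cP(\cX \setminus \{x_j\})$, so removing $x_j$ leaves the conic hull unchanged; whenever the objective is $-\infty$, we have $x_j \notin \cP(\cX \setminus \{x_j\})$, so $x_j$ lies on a unique extreme ray and is retained. Since $\cX$ only shrinks during the pass, a vector certified as extreme when tested remains extreme against the smaller final set, because $\cP(\cX_{\mathrm{final}} \setminus \{x_j\}) \subseteq \cP(\cX_{\mathrm{then}} \setminus \{x_j\})$; hence every survivor genuinely lies on an extreme ray. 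Conversely, since the conic hull is preserved and every extreme ray of a finitely generated cone must contain one of its generators, $\Psi^*$ retains at least one vector per extreme ray. Finally, no two survivors can share a ray: if $x_i, x_j$ lay on a common ray with $x_j$ tested later, then $x_i$ would still be present and would force $x_j \in \cP(\cX \setminus \{x_j\})$, eliminating it. Thus $\Psi^*$ has exactly one representative per extreme ray.

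For the second claim, I would invoke Lemma~\ref{lem:extreme_ray_coverage_for_teaching}: a set $T \subseteq \cS$ teaches $\pi^\dagger$ successfully if and only if the difference vectors it induces cover every extreme ray of $\cP(\Psi(D_\cS))$. By construction $V_s$ records exactly the extreme rays in $\Psi^*$ covered by state $s$ (matched through the normalized vectors $\hat\psi$), so $T$ covers all extreme rays if and only if $\bigcup_{s \in T} V_s = \Psi^*$. This gives a cardinality-preserving correspondence between valid teaching sets and set covers of the instance $(\Psi^*, \{V_s\})$, so a minimum set cover $T^*$ is precisely a minimum-cardinality valid teaching set. Consequently $D^* = D_{T^*}$ is optimal and $|D^*| = |T^*|$ equals the teaching dimension defined by the optimization problem~\eqref{eqn:optimal_teach}.

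The main obstacle I anticipate is the correctness of the single-pass \textbf{MinimalExtreme} elimination: one must verify carefully that modifying $\cX$ in place during the loop neither drops an entire extreme ray nor leaves redundant duplicates, which is exactly the conic-hull invariance plus monotonicity argument sketched above. The reduction to set cover in the second part is comparatively routine once Lemma~\ref{lem:extreme_ray_coverage_for_teaching} is available.
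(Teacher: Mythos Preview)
Your proposal is correct and follows essentially the same approach as the paper: establish that \textbf{MinimalExtreme} returns one representative per extreme ray via Lemma~\ref{lem:extreme_rays}, then reduce optimal teaching to the finite set cover instance $(\Psi^*,\{V_s\})$ via Lemma~\ref{lem:extreme_ray_coverage_for_teaching}. If anything, your invariant-and-monotonicity argument for the single-pass correctness of \textbf{MinimalExtreme} (preservation of $\cP(\cX)$, survivors remaining extreme against the final set, and impossibility of two survivors on a common ray) is more carefully spelled out than the paper's own treatment of that step.
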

\begin{proof}
	Lemma~\ref{lem:extreme_ray_coverage_for_teaching} tells us that for a valid teaching, the teacher must induce at least one feature difference vector on each of the extreme rays of $\cP(\Psi(D_\cS))$. The iterative elimination procedure in \textbf{MinimalExtreme} in Algorithm \ref{alg:TIE} first finds unique representatives for each extreme ray of $\cP(\Psi(D_\cS))$. This follows from lemma~\ref{lem:dual}. Let $\cX$ be the surviving set of vectors at the start of an iteration where $x_j$ is considered. We have that if $x_j \in \cP(\cX/\{x\})$ it will get eliminated by the extreme ray test \ref{lem:extreme_rays} and on the other hand if $x_j$ is unique representative for an extreme ray in $\cX$, we have $x_j \notin \cP(\cX/\{x\})$ and thus $x_j$ will not get eliminated. At every iteration, we either eliminate a vector in $\Psi(D_\cS)$ or that vector is a unique representative for an extreme ray of $\cP(\Psi(D_\cS))$ and cannot be eliminated. Thus, at the end of the iterative elimination procedure, we recover a set of unique representative vectors $\Psi^*$ for each extreme ray of $\cP(\Psi(D_\cS))$.
	
	Next step involves finding a smallest subset of states $T \subseteq \cS$ that can cover all the extreme rays. This is done by a set cover problem defined on lines 4-7 of the \textbf{OptimalTeach} procedure in Algorithm \ref{alg:TIE}. The set of unique representatives of extreme rays forms the universe to be covered and each state defines a subset of representatives for extreme rays that it can cover. The minimum number of subsets that can cover the entire universe is the minimum number of states that covers all the extreme rays giving us $T^* \subseteq \cS$ as an optimal solution for teaching problem. 
\end{proof}


\end{document}